\newtheorem{theorem}{Theorem}
\newtheorem{corollary}{Corollary}
\newtheorem{definition}{Definition}
\begin{document}

\title{One-step Bipartite Graph Cut: A Normalized Formulation and Its Application to Scalable Subspace Clustering}

\author{Si-Guo Fang, 
	Dong Huang,
	Chang-Dong Wang,
	and~Jian-Huang Lai,
	\IEEEcompsocitemizethanks{\IEEEcompsocthanksitem S.-G. Fang and D. Huang are with the College of Mathematics and Informatics, South China Agricultural University, Guangzhou, China. \protect\\
		E-mail: siguofang@hotmail.com, huangdonghere@gmail.com. \protect\\
		(Corresponding author: Dong Huang)
		\IEEEcompsocthanksitem C.-D. Wang and J.-H. Lai are with the School of Computer Science and Engineering,
		Sun Yat-sen University, Guangzhou, China, and also with Guangdong Key Laboratory of Information Security Technology, Guangzhou, China, and also with Key Laboratory of Machine Intelligence and Advanced Computing, Ministry of Education, China.\protect\\
		E-mail: changdongwang@hotmail.com, stsljh@mail.sysu.edu.cn.}
}

\markboth{}%
{Shell \MakeLowercase{\textit{et al.}}: Bare Demo of IEEEtran.cls for Computer Society Journals}

\IEEEtitleabstractindextext{%
\begin{abstract}
The bipartite graph structure has shown its promising ability in facilitating the subspace clustering and spectral clustering algorithms for large-scale datasets. To avoid the post-processing via $k$-means during the bipartite graph partitioning, the constrained Laplacian rank (CLR) is often utilized for constraining the number of connected components (i.e., clusters) in the bipartite graph, which, however, neglects the distribution (or normalization) of these connected components and may lead to imbalanced or even ill clusters. Despite the significant success of normalized cut (Ncut) in general graphs, it remains surprisingly an open problem how to enforce a one-step normalized cut for bipartite graphs, especially with linear-time complexity. In this paper, we first characterize a novel one-step bipartite graph cut (OBCut) criterion with normalized constraints, and theoretically prove its equivalence to a trace maximization problem. Then we extend this cut criterion to a scalable subspace clustering approach, where adaptive anchor learning, bipartite graph learning, and one-step normalized bipartite graph partitioning are simultaneously modeled in a unified objective function, and an alternating optimization algorithm is further designed to solve it in linear time. Experiments on a variety of general and large-scale datasets demonstrate the effectiveness and scalability of our approach.
\end{abstract}
	
\begin{IEEEkeywords}
		Data clustering, Bipartite graph learning, Bipartite Graph cut, Subspace clustering, Spectral clustering.
\end{IEEEkeywords}}

\maketitle

\IEEEdisplaynontitleabstractindextext

\IEEEpeerreviewmaketitle

\section{Introduction}\label{sec:introduction}

\IEEEPARstart{D}{ata} clustering is one of the most fundamental topics in knowledge discovery and data mining, which aims to partition a set of data samples into a number of disjoint subsets, each referred to as a cluster. Among the clustering techniques that have been developed, the subspace clustering technique has been gaining increasing attention in recent years \cite{vidal2011subspace,tang19_tmm,SGL,FPMVS-CAG}, due to its ability to explore the topological relationship between data samples while tackling the so-called ``curse of dimensionality'' for high-dimensional data. 

The goal of subspace clustering is to pursue a self-representation matrix based on the self-expressive property \cite{fettal2023scalable,XU2023109152}.
Specifically, there have been several classical subspace clustering methods in the literature, including the sparse subspace clustering (SSC) \cite{SSC}, the low-rank representation (LRR) \cite{LRR}, and the least squares regression (LSR) \cite{LSR}. These subspace clustering methods \cite{SSC,LRR,LSR} typically perform two separate steps, i.e., (i) the similarity graph learning (via subspace learning) and (ii) the spectral partitioning, to obtain the clustering result, yet lack the ability to adaptively and jointly achieve the graph learning and graph partitioning. To bridge this gap, Li et al. \cite{LiSubspace} proposed a subspace clustering method which is able to jointly learn the similarity graph and the segmentation. In spite of the efforts to (partially) address the unified formulation problem, another common limitation to most of previous subspace clustering methods \cite{SSC,LRR,LSR,LiSubspace,You2016CVPR} is that they typically suffer from the cubic computational complexity, which significantly restricts their application in large-scale datasets.

Note that the \textit{subspace clustering} is generally associated with the \textit{spectral clustering}, where the spectral clustering is often adopted to partition the learned similarity matrix (via subspace learning) for the final clustering. To make the subspace/spectral clustering  feasible for large-scale datasets, the bipartite graph formulation has recently emerged as a promising strategy to greatly reduce the computational complexity of subspace/spectral clustering \cite{Nystrom,LSC,U-SPEC}. Typically, it first generates $M$ anchors (also known as representatives or landmarks) to represent the entire dataset with $k \le M\ll N$, where $k$ is the desired number of clusters. Then it constructs (or learns) a bipartite graph that connects the $N$ original samples and the $M$ anchors, which can be regarded as encoding the full sample-wise relationships through a small number of anchors \cite{FSSF} and is able to significantly alleviate the time and space complexity of subspace clustering and spectral clustering.
Specifically, the Nystr\"om approximation method \cite{Nystrom} randomly selects a certain number of anchors to construct the bipartite graph. Cai et al. \cite{LSC} proposed the landmark-based spectral clustering (LSC) method, which selects the anchors by performing the $k$-means clustering and then constructs a sparse affinity matrix (corresponding to a bipartite graph) for later spectral partitioning. Huang et al. \cite{U-SPEC} presented the ultra-scalable spectral clustering (U-SPEC) method, where a hybrid anchor selection strategy and a fast $K$-nearest neighbor approximation technique are devised to efficiently construct the bipartite graph and then the transfer cut \cite{TCut} is utilized to partition the bipartite graph.

Although the bipartite graph based subspace/spectral clustering methods \cite{Nystrom,LSC,U-SPEC} have achieved significant progress in reducing the computational complexity, yet most of them rely on some heuristic combinations of multiple separate steps, and are especially faced with two critical issues.
\begin{itemize}
	\item In the bipartite graph construction process, the bipartite graphs in previous works \cite{Nystrom,LSC,U-SPEC} are mostly predefined, which are separated from the later partitioning process and lack the desired ability of adaptive graph learning.
	\item In the bipartite graph partitioning process, they mostly require an additional $k$-means step to construct the clustering from the spectral embedding, which fail to directly learn the  discrete clustering structure and may be influenced by the instability of the $k$-means discretization.
\end{itemize}

Recently some efforts have been made to deal with the direct (or one-step) graph partitioning problem. 
A popular technique to directly learn the discrete clustering structure is the constrained Laplacian rank (CLR) strategy \cite{SFMC,CLR-single1,SGL}. It obtains the final clustering labels from the graph connectivity perspective by constraining the rank of the Laplacian matrix. More specifically, it typically learns a graph with a certain number of connected components, where each connected component naturally forms a final cluster.
For example, Nie et al. \cite{CLR-single1} proposed a graph-based clustering method based on the CLR strategy, which directly learns a similarity graph with a certain number of connected components.
Li et al. \cite{8341858} employed a rank-constrained similarity graph to recover the block-diagonal structure of an initial graph, where the learned embedding and the low-dimensional projection are jointly optimized.
Zhong et al. \cite{ZHONG2020127} imposed a rank constraint on the self-representation matrix,  and took into consideration both the global and local structures in subspace learning and graph regularization.
Note that the above-mentioned CLR-based methods \cite{CLR-single1,8341858,ZHONG2020127} are designed for the general graph (typically with a $N\times N$ similarity matrix), which may not be feasible for very large datasets. To alleviate the computational bottleneck,
Nie et al. \cite{co-clustering-nie2017} further proposed a co-clustering method, where the CLR constraint is imposed on the \textit{bipartite graph} and thus the computational complexity can be significantly reduced.
Kang et al. \cite{SGL} proposed a bipartite graph learning method with a connectivity constraint, where a structured bipartite graph can be adaptively learned in a subspace clustering framework.

These existing CLR-based clustering methods \cite{CLR-single1,8341858,ZHONG2020127,co-clustering-nie2017,SGL} aim to build a similarity graph with a desired number of connected components (or clusters), which, however, neglect the distribution (or normalization) of these connected components. In the conventional spectral clustering algorithms, such as the normalized cut (Ncut) \cite{Ncut}, the normalization of clusters plays an important role in avoiding the generation of some heavily imbalanced or even ill clusters (e.g., a cluster with a few or a single data sample). \textit{Yet surprisingly, under the bipartite graph setting, it remains an open problem how to simultaneously enforce bipartite graph learning and normalized (or balanced) partitioning without requiring additional post-processing, while maintaining high efficiency for large-scale datasets.}

To address this, this paper presents a novel \textbf{o}ne-step \textbf{b}ipartite graph \textbf{cut} (OBCut) approach, which for the first time, to our knowledge, formulates and solves the one-step bipartite graph learning and partitioning problem with normalized constraints.
Particularly, we theoretically characterize a novel bipartite graph cut criterion, which can be  equivalently transformed into a matrix trace form and is capable of balancing both the node size and the edge volume of each cluster. Theoretical analysis reveals the connection between the proposed bipartite graph cut criterion and two classical cut criteria (i.e., the RatioCut and the Ncut). Then, we integrate the bipartite graph cut criterion into an anchor-based subspace clustering framework, which simultaneously enforces adaptive anchor learning, bipartite graph learning, and normalized bipartite graph partitioning in a unified objective function. Further, an efficient optimization algorithm is designed to directly learn the discrete cluster indicator matrix without additional post-processing, which notably has linear time complexity in sample size. Extensive experiments are conducted on eight real-world general-scale and large-scale datasets, whose data sizes range from 832 to 195,537 and dimensions range from 7 to 30,000. The experimental results demonstrate the superiority of our OBCut approach over the state-of-the-art subspace/spectral clustering approaches.

The main contributions of this paper are summarized as follows.

\begin{itemize}
  \item A new normalized bipartite graph cut criterion is theoretically characterized, which can be equivalently transformed into a trace maximization problem and is featured by its ability to achieve one-step graph cut with the node size and the edge volume of each cluster simultaneously balanced.
  \item A scalable subspace clustering approach is proposed based on the new bipartite graph cut criterion, which formulates the adaptive anchor learning, the bipartite graph learning, and the one-step normalized bipartite graph partitioning into a unified optimization framework.
  \item An alternating minimization algorithm is designed to solve this optimization problem in linear time. Experiments on a variety of datasets have confirmed the advantageous performance of our approach over the state-of-the-art.
\end{itemize}

The remainder of this paper is organized as follows. The related works on spectral clustering, subspace clustering, and one-step clustering are reviewed in Section \ref{sec:related_work}. The proposed OBCut approach is described in Section \ref{sec:method}. The optimization algorithm and its theoretical analysis are provided in Section \ref{sec:opt}. The experimental results are reported in Section \ref{sec:experiments}. Finally, this paper is concluded in Section \ref{sec:conclusion}.

\section{Related Work}\label{sec:related_work}
In this section, the related works on spectral clustering, subspace clustering, and one-step clustering will be reviewed in Sections \ref{sec:SC}, \ref{sec:SubspaceClustering}, and \ref{sec:OSC}, respectively.

\subsection{Spectral Clustering}\label{sec:SC}
Spectral clustering has shown its advantage in discovering clusters with nonlinearly separable shapes. But the conventional spectral clustering typically suffers from its cubic time complexity, which restricts its applications in large-scale datasets. 

In recent years, some fast approximation methods (e.g., the bipartite graph based methods) have gained increasing popularity for alleviating the huge computational burden of spectral clustering \cite{ESCG,LSC,FastESC,EulerSC,U-SPEC,RKSC,DCDP-ASC}. 
For example,
Liu et al. \cite{ESCG} constructed a small set of supernodes from the original nodes in the similarity graph, and connected these supernodes with the original nodes to form a bipartite graph, based on which an efficient spectral clustering method is presented.
Cai et al. \cite{LSC} selected a set of landmarks (or anchors) via the $k$-means clustering, and proposed the landmark-based representation for large-scale spectral clustering.
He et al. \cite{FastESC} designed a fast large-scale spectral clustering method with the explicit feature mapping leveraged to speed up the eigenvector approximation.
Wu et al. \cite{EulerSC} utilized a positive Euler kernel to generate a non-negative similarity matrix and further developed an Euler spectral clustering method, which can be optimized by an efficient Stiefel-manifold-based gradient algorithm.
Huang et al. \cite{U-SPEC} proposed an ultra-scalable spectral clustering (U-SPEC) method based on hybrid anchor selection and fast $K$-nearest neighbor approximation.
Cheng et al. \cite{DCDP-ASC} designed an approximate spectral clustering method via dense cores and density peaks, which constructs a decision graph by computing the geodesic distances between the  dense cores and then expands the partitioning result of the dense cores to the data samples in the entire dataset.

\subsection{Subspace Clustering}\label{sec:SubspaceClustering}
Subspace clustering aims to learn a subspace representation matrix, upon which the similarity matrix can be derived and thus the final clustering can be obtained by partitioning this similarity matrix (typically via spectral clustering) \cite{Nie2020TKDE,SGL}. In subspace clustering, it is generally assumed that all data samples lie in multiple low-dimensional subspaces and can be expressed as a linear combination of the other samples in the same subspace \cite{Nie2020TKDE,SGL}.

Many subspace clustering works have been developed in the literature. For example, You et al. \cite{You2016CVPR} proposed a sparse subspace clustering method via the orthogonal matching pursuit algorithm.
Peng et al. \cite{PengxiSubspace} utilized a sparse L2-Graph to alleviate the potentially negative effects of the errors from the subspace representation.
Lu et al. \cite{Lu2018TPAMI} designed a block diagonal matrix induced regularizer to learn the self-representation for subspace clustering.
Chang et al. \cite{Chang2019ICASSP} employed the low-rank representation to learn a structured bipartite graph for subspace clustering, which can avoid the extra post-processing when obtaining the final clustering labels.
Nie et al. \cite{Nie2020TKDE} introduced a rank minimization problem, where a subspace indicator (i.e., the cluster indicator) can be learned by optimizing a relaxed piece-wise objective function.
Fan et al. \cite{Fan2021KDD} proposed a matrix factorization model for efficient subspace clustering, which can assign the data samples to the corresponding subspace directly.
Nie et al. \cite{LAPIN} integrated the anchor learning and the structured bipartite graph learning into a subspace clustering framework via CLR. Though some efforts have been made to directly learn the discrete clustering structure \cite{Chang2019ICASSP,LAPIN,SGL}, they typically seek to constrain the number of connected components in the graph (especially via CLR) yet lack the ability to consider the distribution (or normalization) of these connected components.

\subsection{One-step Clustering}\label{sec:OSC}
The subspace clustering is frequently associated with the spectral clustering, where the spectral clustering is performed on the learned similarity matrix (from the subspace representation) to obtain the final clustering. 

Conventional spectral clustering typically comply with the two-step formulation, where the spectral embedding is learned via eigen-decomposition in the first step and the $k$-means discretization is performed on the spectral embedding in the second step, which, however, cannot optimize these two steps simultaneously and may be negatively influenced by the instability of $k$-means \cite{yangDiscrete}.
Recently, some one-step spectral clustering methods have been proposed to obtain the discrete clustering solution without post-processing. 

One strategy is to employ the spectral rotation, which optimizes the \textit{continuous} spectral embedding and the \textit{discrete} cluster indicator matrix simultaneously to avoid the potential information loss that arises from the two-step methods \cite{SR7}. For example, Yang et al. \cite{SR2} imposed the nonnegative constraint on the spectral embedding, and presented a novel spectral clustering method with nonnegativity, discreteness, and discrimination. Pang et al. \cite{SR4} proposed a joint model to optimize the spectral embedding and the binary cluster indicator matrix simultaneously. Lu et al. \cite{SR7} integrated multiple kernel $k$-means (MKKM) and the spectral rotation into a unified framework. 
Another strategy is to directly compute the discrete cluster indicator matrix without relaxation. Chen et al. \cite{Direct1-DNC} directly solved the normalized cut and obtained the discrete cluster indicator matrix by optimizing the classical normalized cut model without extra post-processing. Some recent studies \cite{Direct3-uniform,UOMvSC} show that the classic  $k$-means clustering and the spectral clustering can be reformulated as a unified framework, where the clustering labels can be directly learned during their optimization process.
Despite the significant progress, these one-step spectral clustering methods are mostly designed for the general graph (typically with an $N\times N$ similarity matrix), which are not feasible for the bipartite graph. More recently, some attempts have been carried out to enable the one-step spectral clustering for the bipartite graph \cite{SGL}, which utilize the Laplacian low-rank constraint to control the number of connected components, but may still suffer from imbalanced or ill clusters due to their lack of the ability in conducting direct and normalized bipartite graph cut.

\section{Methodology}\label{sec:method}
In this section, we describe the proposed OBCut approach in detail. Specifically, the notations are summarized in Section \ref{Notations}. The adaptive bipartite graph construction via subspace learning is formulated in Section \ref{SubspaceLearning}. The normalized bipartite graph cut criterion is presented in Section \ref{NormalizedFormulation}. The computation of the new cut criterion is analyzed in Section \ref{sec:proof}. Finally, Section~\ref{Unified Formulation} provides the unified formulation of our OBCut approach.

\subsection{Notations}\label{Notations}
Throughout this paper, the set is written as uppercase blackboard bold, such as $\mathbb{R}$. The vector and the matrix are written as lowercase boldface and uppercase boldface, respectively. Given a matrix $\textbf{B}\in\mathbb{R}^{N\times M}$, its $(i,j)$-th entry is denoted as $b_{ij}$ or $\textbf{B}(i,j)$, and its $i$-th row and $j$-th column are denoted as $\textbf{b}_{i:}$ (or $\textbf{B}(i,:)$) and $\textbf{b}_{:j}$ (or $\textbf{B}(:,j)$), respectively. Let the transpose of matrix $\textbf{B}$ be denoted as $\textbf{B}^\top$, and the $F$-norm of $\textbf{B}$ be denoted as $||\textbf{B}||_F=\sqrt{\sum_{i=1}^N\sum_{j=1}^Mb_{ij}^2}=\sqrt{Tr(\textbf{B}^\top\textbf{B})}$, where $Tr(\cdot)$ is the trace of the matrix. Let $\textbf{I}$ denote the identity matrix, $\textbf{1}$ denote a column vector with all entries being one, and $\textbf{B}\ge 0$ denote that all the entries in this matrix are larger than or equal to zero.
For clarity, the main mathematical notations and their descriptions used in this paper are shown in Table \ref{tab:notations}.

\begin{table}[!t]
\begin{center}
\centering 
\caption{Summary of Notations} \vskip -0.08 in
\label{tab:notations}
\begin{tabular}{m{1.99cm}m{5.4cm}}
\toprule
Notations &Descriptions\\
\midrule
$\mathbb{S}_i$          &The sample set in the $i$-th cluster\\
$\mathbb{A}_j$          &The anchor set in the $j$-th cluster\\
$N$                                     &The number of samples\\
$M$                                     &The number of anchors\\
$k$                                     &The number of clusters\\
$d$                               &Dimension\\
$\textbf{X}\in\mathbb{R}^{d\times N}$       &Data matrix\\
$\textbf{A}\in\mathbb{R}^{d\times M}$             &The anchor matrix\\
$\textbf{B}\in\mathbb{R}^{N\times M}$       &The similarity matrix of a bipartite graph\\
$\textbf{Y}\in\{0,1\}^{N\times K}$      &The discrete cluster indicator matrix\\
$\textbf{H}\in\mathbb{R}^{M\times K}$   &The spectral embedding matrix for anchors\\
\bottomrule
\end{tabular}
\end{center}\vskip -0.15 in
\end{table}

\subsection{Adaptive Bipartite Graph Construction via Anchor-Based Subspace Learning}\label{SubspaceLearning}
Given a data set with $N$ data samples, let $\textbf{X}\in\mathbb{R}^{d\times N}$ denote its data matrix, where the $i$-th column is the feature vector of the $i$-th sample and $d$ is the dimension. 

In the conventional bipartite graph formulation, to select a set of anchors, the random sampling based selection and the $k$-means based selection are two of the most frequently-used strategies \cite{LSC,LMVSC}. However, the random sampling based anchor selection may not sufficiently reflect the overall distribution of the data \cite{TBGL-MVC}, while the $k$-means based selection is able to discover a set of more representative anchors but cannot well handle the non-linearly separable data.
Recently, some studies have gone beyond the conventional random sampling based or $k$-means based anchor selection to explore more effective strategies, such as the hybrid representative selection (HRS) \cite{U-SPEC}, the directly alternate sampling (DAS) \cite{SFMC}, and the variance-based de-correlation anchor selection (VDA) \cite{TBGL-MVC}. The HRS strategy performs the random sampling and the $k$-means clustering sequentially, which strikes a balance between the effectiveness of the $k$-means based selection and the efficiency of the random sampling based selection. Both the DAS and VDA strategies compute the score of each sample point by a self-defined function so as to measure the importance of the sample for anchor selection.

Despite the considerable progress, the previous works \cite{LSC,LMVSC,TBGL-MVC,U-SPEC,SFMC} mostly tend to select the anchors in a fixed manner, yet cannot go beyond the conventional (fixed) anchor selection to enforce the adaptive anchor learning or even the joint modeling of anchor learning and bipartite graph learning. 

In this paper,
we seek to jointly and adaptively learn the anchor matrix $\textbf{A}\in\mathbb{R}^{d\times M}$ and the bipartite graph $\textbf{B}\in\mathbb{R}^{N\times M}$, where $M$ is the number of anchors. It follows the basic assumption of subspace learning that each sample can be written as an affine or linear combination of the learned anchors, that is, $\textbf{X}=\textbf{A}\textbf{B}^\top+\textbf{E}$, where $\textbf{E}\in\mathbb{R}^{d\times N}$ is the error term. To adaptively learn the anchor matrix and the bipartite graph, we define the objective function as
\begin{align}
\label{eq:Subspace}
\min_{\textbf{A},\textbf{B}}~&||\textbf{X}-\textbf{A}\textbf{B}^\top||_F^2\notag\\
s.t.~&\textbf{B}\textbf{1}=\textbf{1},\textbf{B}\ge0,
\end{align}
where the constraints $\textbf{B}\textbf{1}=\textbf{1}$ and $\textbf{B}\ge0$ restrict each entry in $\textbf{B}$ to $0\le b_{ij}\le 1$. In the following, this objective function will serve as the graph learning term in our unified objective function in Section~\ref{Unified Formulation} .

\subsection{Normalized Formulation of Bipartite Graph Cut}\label{NormalizedFormulation}
Besides the bipartite graph \textit{learning}, we proceed to formulate the bipartite graph \textit{partitioning}, for which purpose we for the first time, to the best of our knowledge, theoretically characterize a one-step bipartite graph cut criterion with normalization and optimize it in linear time.

Previous bipartite graph based methods \cite{Nystrom,LSC,U-SPEC} generally involve a two-step process for graph partitioning, which first conduct eigen-decomposition on the bipartite graph (to obtain the spectral embedding by stacking the first $k$ eigen-vectors), and then build the final clustering labels by performing $k$-means on the spectral embedding. Instead of following the conventional two-step formulation, we propose a new one-step bipartite graph cut criterion in this section.
Specifically, we first propose a variant of $\emph{RatioCut}$ \cite{von2007tutorial} for the bipartite graph, and then, inspired by the normalized cut ($\emph{Ncut}$) \cite{Ncut}, extend this variant to a normalized formulation of the bipartite graph cut.

Given a bipartite graph $\textbf{G}(\mathbb{S},\mathbb{A},\textbf{B})$ with $\textbf{B}=(b_{ij})\in\mathbb{R}^{N\times M}$, where $\mathbb{S}$ is the sample set with $N$ elements, $\mathbb{A}$ is the anchor set with $M$ ($M\ll N$) elements, and $b_{ij}>0$ denotes the weight of the edge between the $i$-th sample and the $j$-th anchor. Let $b_{ij}=0$ if there is no edge between the $i$-th sample and the $j$-th anchor. Note that each edge in $\textbf{G}(\mathbb{S},\mathbb{A},\textbf{B})$ has one endpoint in $\mathbb{S}$ and one endpoint in $\mathbb{A}$. That is, there are no edges between two samples or between two anchors. For convenience, we can use the similarity matrix $\textbf{B}$ to represent the bipartite graph $\textbf{G}(\mathbb{S},\mathbb{A},\textbf{B})$.

In graph theory, for two disjoint sets $\mathbb{S}_i$ and $\mathbb{A}_j$, a $\emph{cut}$ between $\mathbb{S}_i$ and $\mathbb{A}_j$ is defined as
\begin{align}
	\label{eq:cut_orig}
cut(\mathbb{S}_i,\mathbb{A}_j)=\sum\limits_{n\in\mathbb{S}_i,m\in\mathbb{A}_j}\textbf{B}(n,m).
\end{align}
The sample set $\mathbb{S}$ and the anchor set $\mathbb{A}$ can be partitioned into multiple disjoint sets, i.e., $\mathbb{S}=\mathbb{S}_1\cup\cdots\cup\mathbb{S}_k$ with $\mathbb{S}_i\cap\mathbb{S}_j=\emptyset~(\forall i\neq j)$, and $\mathbb{A}=\mathbb{A}_1\cup\cdots\cup\mathbb{A}_k$ with $\mathbb{A}_i\cap\mathbb{A}_j=\emptyset~(\forall i\neq j)$, where $k$ is the number of clusters. Note that the elements of the pair $(\mathbb{S}_i,\mathbb{A}_i)$ belong to the same cluster. By removing edges between different clusters, the degree of similarity between different clusters can be formulated as the sum of the weights of the edges that have been removed. The simplest and most direct aim of clustering (or graph partitioning) is to minimize the similarity between all different clusters, that is
\begin{align}
\label{eq:cut}
&cut((\mathbb{S}_1,\mathbb{A}_1),\cdots,(\mathbb{S}_k,\mathbb{A}_k))\notag\\
=&\frac{1}{2}\sum\limits_{i=1}^k(cut(\mathbb{S}_i,\mathop{\cup}_{j\neq i}\mathbb{A}_j)+cut(\mathop{\cup}_{j\neq i}\mathbb{S}_j,\mathbb{A}_i)).
\end{align}
According to \cite{Ncut,dataclustering}, the minimum $\emph{cut}$ criterion tends to cut out imbalanced clusters, especially for the isolated nodes in the bipartite graph. Inspired by the $\emph{RatioCut}$, we formulate the following variant of the $\emph{cut}$ in \eqref{eq:cut}:
\begin{align}
\label{eq:Varcut1}
\sum\limits_{i=1}^k(\frac{cut(\mathbb{S}_i,\mathop{\cup}\limits_{j\neq i}\mathbb{A}_j)}{|\mathbb{S}_i|}+\frac{cut(\mathop{\cup}\limits_{j\neq i}\mathbb{S}_j,\mathbb{A}_i)}{|\mathbb{A}_i|}).
\end{align}
This definition in \eqref{eq:Varcut1} can balance the number of samples between different clusters, but it cannot balance the number of samples and that number of anchors in the same cluster. In view of this, we seek a partition of $\textbf{G}(\mathbb{S},\mathbb{A},\textbf{B})$, where the ratios of samples and anchors belonging to the same cluster should be as similar as possible. Thus we have the following variant of \eqref{eq:Varcut1}:
\begin{align}
\label{eq:Varcut2}
\sum\limits_{i=1}^k&(\frac{cut(\mathbb{S}_i,\mathop{\cup}\limits_{j\neq i}\mathbb{A}_j)}{|\mathbb{S}_i|}+\frac{cut(\mathop{\cup}\limits_{j\neq i}\mathbb{S}_j,\mathbb{A}_i)}{|\mathbb{A}_i|}\notag\\
&+cut(\mathbb{S}_i,\mathbb{A}_i)(\frac{1}{\sqrt{|\mathbb{S}_i|}}-\frac{1}{\sqrt{|\mathbb{A}_i|}})^2).
\end{align}
Let $\textbf{P}$ be the augmented graph of $\textbf{B}$ defined as \cite{SFMC}
 \begin{align}
 \label{eq:P}
\textbf{P}=\begin{bmatrix}
&\textbf{B}\\
\textbf{B}^\top&\\
\end{bmatrix}\in\mathbb{R}^{(N+M)\times(N+M)}.
\end{align}
The degree matrix $\textbf{D}=diag(\textbf{P}\textbf{1})$ is a diagonal matrix. It can also be written in a block-diagonal form, that is
\begin{align}
\label{eq:defD}
\textbf{D}=\begin{bmatrix}
\textbf{D}_{(N)}&\\
&\textbf{D}_{(M)}\\
\end{bmatrix},
\end{align}
where $\textbf{D}_{(N)}=diag(\textbf{B}\textbf{1})$ and $\textbf{D}_{(M)}=diag(\textbf{B}^\top\textbf{1})$. Thus the graph Laplacian of the bipartite graph $\textbf{G}(\mathbb{S},\mathbb{A},\textbf{B})$ can be written as $\textbf{L}=\textbf{D}-\textbf{P}$.

Note that the definition of Eq. \eqref{eq:Varcut2} does not consider the degree of each point (i.e., each node). Inspired by the $\emph{Ncut}$, it is intuitive that, not only the number of nodes in each cluster, but also the sum of edge weights (i.e., the sum of degrees of all nodes) in each cluster should be balanced. Thus we present the following bipartite graph cut criterion:
\begin{align}
\label{eq:BiCut}
Bip&artiteGraphCut((\mathbb{S}_1,\mathbb{A}_1),\cdots,(\mathbb{S}_k,\mathbb{A}_k))\notag\\
=\sum\limits_{i=1}^k&(\frac{cut(\mathbb{S}_i,\mathop{\cup}\limits_{j\neq i}\mathbb{A}_j)}{|\mathbb{S}_i|}+\frac{cut(\mathop{\cup}\limits_{j\neq i}\mathbb{S}_j,\mathbb{A}_i)}{|\mathbb{A}_i|}\notag\\
&+cut(\mathbb{S}_i,\mathbb{A}_i)(\frac{1}{\sqrt{|\mathbb{S}_i|}}-\frac{1}{\sqrt{|\mathbb{A}_i|}})^2\notag\\
&-\frac{\sum\limits_{n\in\mathbb{S}_i}\textbf{D}_{(N)}(n,n)}{|\mathbb{S}_i|}-\frac{\sum\limits_{m\in\mathbb{A}_i}\textbf{D}_{(M)}(m,m)}{|\mathbb{A}_i|}).
\end{align}

An important property of the proposed cut criterion in \eqref{eq:BiCut} is that it can be equivalently expressed as the matrix trace form, that is
\begin{align}
\label{eq:equivalentConclusion}
&\min~BipartiteGraphCut((\mathbb{S}_1,\mathbb{A}_1),\cdots,(\mathbb{S}_k,\mathbb{A}_k))\notag\\
\Leftrightarrow&\max\limits_{\bar{\textbf{Y}}_{(N)}\in Nind(\mathbb{S},k),\bar{\textbf{Y}}_{(M)}\in Nind(\mathbb{A},k)}Tr(\bar{\textbf{Y}}_{(N)}^\top\textbf{B}\bar{\textbf{Y}}_{(M)}),
\end{align}
where the definitions of $\bar{\textbf{Y}}_{(N)}$ and $\bar{\textbf{Y}}_{(M)}$, and the proof of the equivalence in \eqref{eq:equivalentConclusion} will be shown in Section \ref{sec:proof}.

\subsection{Computing of the Cut Criterion}\label{sec:proof}
In this section, we describe the definition of the normalized indicator and theoretically prove that the above-mentioned bipartite graph cut criterion can be equivalently expressed as a brief matrix trace form.

\begin{definition}
\label{def:Nind}
Given a partition of $\mathbb{S}$ into $k$ sets $\mathbb{S}_1,\cdots,\mathbb{S}_k$ with $|\mathbb{S}|=N$, we define the set of normalized indicators from the partition of $\mathbb{S}$ by Nind$(\mathbb{S},k)$. One of the normalized indicators is defined as
\begin{equation}
\label{eq:Nind}
\bar{\textbf{Y}}_{(N)}(i,j)=\left\{
\begin{aligned}
&\frac{1}{\sqrt{|\mathbb{S}_j|}}&\text{if}~i\in\mathbb{S}_j,\\
&0&\text{otherwise}.
\end{aligned}
\right.
\end{equation}
where $i=1,\cdots,N$ and $j=1,\cdots,k$. Then we have $Nind(\mathbb{S},k)=\{\bar{\textbf{Y}}_{(N)}|\bar{\textbf{Y}}_{(N)}^\top\bar{\textbf{Y}}_{(N)}=\textbf{I},\bar{\textbf{Y}}_{(N)}$ as defined in \eqref{eq:Nind}$\}$.
\end{definition}

Thereafter, we have the following theorem.

\begin{theorem}
\label{the:equivalent}
Given a bipartite graph $\textbf{G}(\mathbb{S},\mathbb{A},\textbf{B})$, with $|\mathbb{S}|=N$ and $|\mathbb{A}|=M$, the following conclusion holds:
\begin{align}
\label{eq:OBCut}
&\min~BipartiteGraphCut((\mathbb{S}_1,\mathbb{A}_1),\cdots,(\mathbb{S}_k,\mathbb{A}_k))\notag\\
\Leftrightarrow&\max\limits_{\bar{\textbf{Y}}_{(N)}\in Nind(\mathbb{S},k),\bar{\textbf{Y}}_{(M)}\in Nind(\mathbb{A},k)}Tr(\bar{\textbf{Y}}_{(N)}^\top\textbf{B}\bar{\textbf{Y}}_{(M)}).
\end{align}
\end{theorem}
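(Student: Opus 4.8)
The plan is to expand both sides of the claimed equivalence in terms of the partition quantities $|\mathbb{S}_i|$, $|\mathbb{A}_i|$, and the block sums of $\textbf{B}$, and show they differ only by an additive constant (independent of the partition), so that minimizing one is the same as maximizing the other. First I would record the key bookkeeping identities: for a fixed partition, $\sum_{i=1}^{k}\frac{1}{|\mathbb{S}_i|}\sum_{n\in\mathbb{S}_i}\textbf{D}_{(N)}(n,n)$ together with the two ``cut to the complement'' terms in \eqref{eq:BiCut} can all be rewritten using $\textbf{D}_{(N)}(n,n)=\sum_{m}\textbf{B}(n,m)$ and the decomposition $cut(\mathbb{S}_i,\cup_{j\neq i}\mathbb{A}_j)=\sum_{n\in\mathbb{S}_i}\textbf{D}_{(N)}(n,n)-cut(\mathbb{S}_i,\mathbb{A}_i)$, and symmetrically for the anchor side. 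Substituting these into \eqref{eq:BiCut}, the terms $\frac{1}{|\mathbb{S}_i|}\sum_{n\in\mathbb{S}_i}\textbf{D}_{(N)}(n,n)$ and $\frac{1}{|\mathbb{A}_i|}\sum_{m\in\mathbb{A}_i}\textbf{D}_{(M)}(m,m)$ cancel against the degree-subtraction terms, and after expanding the square $\left(\frac{1}{\sqrt{|\mathbb{S}_i|}}-\frac{1}{\sqrt{|\mathbb{A}_i|}}\right)^2=\frac{1}{|\mathbb{S}_i|}+\frac{1}{|\mathbb{A}_i|}-\frac{2}{\sqrt{|\mathbb{S}_i||\mathbb{A}_i|}}$ one is left with $BipartiteGraphCut(\cdots)=-2\sum_{i=1}^{k}\frac{cut(\mathbb{S}_i,\mathbb{A}_i)}{\sqrt{|\mathbb{S}_i||\mathbb{A}_i|}}$.

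Next I would identify the surviving sum with the trace. By Definition~\ref{def:Nind}, the column $\bar{\textbf{Y}}_{(N)}(:,i)$ has entries $1/\sqrt{|\mathbb{S}_i|}$ on $\mathbb{S}_i$ and $0$ elsewhere, and likewise $\bar{\textbf{Y}}_{(M)}(:,i)$ has entries $1/\sqrt{|\mathbb{A}_i|}$ on $\mathbb{A}_i$; hence $\bigl(\bar{\textbf{Y}}_{(N)}^\top\textbf{B}\bar{\textbf{Y}}_{(M)}\bigr)_{ii}=\frac{1}{\sqrt{|\mathbb{S}_i||\mathbb{A}_i|}}\sum_{n\in\mathbb{S}_i,m\in\mathbb{A}_i}\textbf{B}(n,m)=\frac{cut(\mathbb{S}_i,\mathbb{A}_i)}{\sqrt{|\mathbb{S}_i||\mathbb{A}_i|}}$, so $Tr(\bar{\textbf{Y}}_{(N)}^\top\textbf{B}\bar{\textbf{Y}}_{(M)})=\sum_{i=1}^{k}\frac{cut(\mathbb{S}_i,\mathbb{A}_i)}{\sqrt{|\mathbb{S}_i||\mathbb{A}_i|}}$. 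Combining with the previous paragraph gives $BipartiteGraphCut(\cdots)=-2\,Tr(\bar{\textbf{Y}}_{(N)}^\top\textbf{B}\bar{\textbf{Y}}_{(M)})$ exactly (no leftover constant, in fact), so minimizing the left side over partitions is equivalent to maximizing the trace over the corresponding normalized indicators, which is the claim. I would also check the orthogonality bookkeeping: since distinct clusters have disjoint supports and $\|\bar{\textbf{Y}}_{(N)}(:,i)\|^2=|\mathbb{S}_i|\cdot\frac{1}{|\mathbb{S}_i|}=1$, we indeed have $\bar{\textbf{Y}}_{(N)}^\top\bar{\textbf{Y}}_{(N)}=\textbf{I}$, so the feasible sets $Nind(\mathbb{S},k)$ and $Nind(\mathbb{A},k)$ are exactly in bijection with the partitions, confirming the two optimization problems range over the same objects.

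The main obstacle is the careful cancellation in the first paragraph: one has to be meticulous that every ``to-the-complement'' cut term is correctly split, that the factor $\tfrac12$ appearing in the definition \eqref{eq:cut} versus the coefficient of the terms in \eqref{eq:Varcut1}--\eqref{eq:BiCut} is tracked (the cross terms $\frac{cut(\mathbb{S}_i,\cup_{j\neq i}\mathbb{A}_j)}{|\mathbb{S}_i|}$ are \emph{not} halved, unlike in \eqref{eq:cut}), and that the $+\frac{1}{|\mathbb{S}_i|}+\frac{1}{|\mathbb{A}_i|}$ pieces coming out of the expanded square recombine with the split cut terms to reconstruct exactly the degree sums that then cancel. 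A secondary subtlety is the direction of the equivalence: strictly, minimizing $BipartiteGraphCut$ over all $k$-partitions of $(\mathbb{S},\mathbb{A})$ corresponds to maximizing $Tr(\bar{\textbf{Y}}_{(N)}^\top\textbf{B}\bar{\textbf{Y}}_{(M)})$ over the \emph{discrete} feasible set $\{\bar{\textbf{Y}}_{(N)}\in Nind(\mathbb{S},k),\ \bar{\textbf{Y}}_{(M)}\in Nind(\mathbb{A},k)\}$ — I would state explicitly that $\Leftrightarrow$ here means the two problems have the same optimizers (via the partition $\leftrightarrow$ indicator bijection), not that the relaxation over orthonormal matrices is equivalent; the continuous relaxation is what the optimization section later exploits, but the theorem as stated is the exact combinatorial identity together with this bijection.
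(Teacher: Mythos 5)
Your proposal is correct: the cancellations you describe do go through, and the identity you arrive at, $BipartiteGraphCut((\mathbb{S}_1,\mathbb{A}_1),\cdots,(\mathbb{S}_k,\mathbb{A}_k))=-2\,Tr(\bar{\textbf{Y}}_{(N)}^\top\textbf{B}\bar{\textbf{Y}}_{(M)})$, is exactly the one the paper establishes (with no additive constant, as you note). The route is organized differently, though. The paper does not cancel the degree terms against the split cuts directly; instead it recognizes the first three terms of \eqref{eq:BiCut} for each cluster as the quadratic form $\sum_{n,m}b_{nm}\bigl(\bar{\textbf{Y}}_{(N)}(n,i)-\bar{\textbf{Y}}_{(M)}(m,i)\bigr)^2=\bar{\textbf{y}}_{:i}^\top\textbf{L}\bar{\textbf{y}}_{:i}$ of the augmented-graph Laplacian $\textbf{L}=\textbf{D}-\textbf{P}$ with $\textbf{P}$ as in \eqref{eq:P}, identifies the two degree-subtraction terms with $Tr(\bar{\textbf{Y}}^\top\textbf{D}\bar{\textbf{Y}})$ for the stacked indicator $\bar{\textbf{Y}}=[\bar{\textbf{Y}}_{(N)};\bar{\textbf{Y}}_{(M)}]$, and then obtains $Tr(\bar{\textbf{Y}}^\top\textbf{L}\bar{\textbf{Y}})-Tr(\bar{\textbf{Y}}^\top\textbf{D}\bar{\textbf{Y}})=-Tr(\bar{\textbf{Y}}^\top\textbf{P}\bar{\textbf{Y}})=-2\,Tr(\bar{\textbf{Y}}_{(N)}^\top\textbf{B}\bar{\textbf{Y}}_{(M)})$. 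Your version is the more elementary, purely combinatorial bookkeeping (split each complement cut via $cut(\mathbb{S}_i,\cup_{j\neq i}\mathbb{A}_j)=\sum_{n\in\mathbb{S}_i}\textbf{D}_{(N)}(n,n)-cut(\mathbb{S}_i,\mathbb{A}_i)$, expand the square, cancel), and it has the side benefit of producing the explicit closed form $-2\sum_i cut(\mathbb{S}_i,\mathbb{A}_i)/\sqrt{|\mathbb{S}_i||\mathbb{A}_i|}$, which makes the ``normalized association'' interpretation transparent; the paper's version buys the connection to the standard Laplacian machinery (and hence to RatioCut/Ncut) at the cost of introducing $\textbf{P}$, $\textbf{D}$, and $\textbf{L}$. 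Your closing remarks on the partition--indicator bijection and on the meaning of $\Leftrightarrow$ are sound and, if anything, more careful than the paper's own statement.
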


\begin{proof}
Let $\bar{\textbf{Y}}=[\bar{\textbf{Y}}_{(N)};\bar{\textbf{Y}}_{(M)}]$, where $\bar{\textbf{Y}}_{(N)}\in Nind(\mathbb{S},k)$ and $\bar{\textbf{Y}}_{(M)}\in Nind(\mathbb{A},k)$. According to the definitions of $\textbf{D}$ in Eq. \eqref{eq:defD} and $\bar{\textbf{Y}}$, we have
\begin{align}
\label{eq:theorem_term2}
&\sum\limits_{i=1}^k(\frac{\sum\limits_{n\in\mathbb{S}_i}\textbf{D}_{(N)}(n,n)}{|\mathbb{S}_i|}+\frac{\sum\limits_{m\in\mathbb{A}_i}\textbf{D}_{(M)}(m,m)}{|\mathbb{A}_i|})\notag\\
=&Tr(\bar{\textbf{Y}}_{(N)}^\top\textbf{D}_{(N)}\bar{\textbf{Y}}_{(N)}+\bar{\textbf{Y}}_{(M)}^\top\textbf{D}_{(M)}\bar{\textbf{Y}}_{(M)})\notag\\
=&Tr(\bar{\textbf{Y}}^\top\textbf{D}\bar{\textbf{Y}}).
\end{align}
Moreover, for the cluster $(\mathbb{S}_i,\mathbb{A}_i)$, according to the definition of the $\emph{cut}$ in Eq. \eqref{eq:cut_orig}, we know that
\begin{align}
\label{eq:theorem_term1}
&\frac{cut(\mathbb{S}_i,\mathop{\cup}\limits_{j\neq i}\mathbb{A}_j)}{|\mathbb{S}_i|}+\frac{cut(\mathop{\cup}\limits_{j\neq i}\mathbb{S}_j,\mathbb{A}_i)}{|\mathbb{A}_i|}\notag\\
&+cut(\mathbb{S}_i,\mathbb{A}_i)(\frac{1}{\sqrt{|\mathbb{S}_i|}}-\frac{1}{\sqrt{|\mathbb{A}_i|}})^2\notag\\
=&\sum\limits_{n\in\mathbb{S}_i,m\notin\mathbb{A}_i}\frac{b_{nm}}{|\mathbb{S}_i|}+\sum\limits_{n\notin\mathbb{S}_i,m\in\mathbb{A}_i}\frac{b_{nm}}{|\mathbb{A}_i|}\notag\\
&+\sum\limits_{n\in\mathbb{S}_i,m\in\mathbb{A}_i}b_{nm}(\frac{1}{\sqrt{|\mathbb{S}_i|}}-\frac{1}{\sqrt{|\mathbb{A}_i|}})^2\notag\\
=&\sum\limits_{n\in\mathbb{S}_i,m\notin\mathbb{A}_i}b_{nm}(\frac{1}{\sqrt{|\mathbb{S}_i|}}-0)^2+\sum\limits_{n\notin\mathbb{S}_i,m\in\mathbb{A}_i}b_{nm}(0-\frac{1}{\sqrt{|\mathbb{A}_i|}})^2\notag\\
&+\sum\limits_{n\in\mathbb{S}_i,m\in\mathbb{A}_i}b_{nm}(\frac{1}{\sqrt{|\mathbb{S}_i|}}-\frac{1}{\sqrt{|\mathbb{A}_i|}})^2.
\end{align}
Further, according to the definitions of $\bar{\textbf{Y}}_{(N)}$ and $\bar{\textbf{Y}}_{(M)}$, we have that
\begin{align}
\label{eq:theorem_term1_var1}
\eqref{eq:theorem_term1}=&\sum\limits_{n=1}^N\sum\limits_{m=1}^Mb_{nm}(\bar{\textbf{Y}}_{(N)}(n,i)-\bar{\textbf{Y}}_{(M)}(m,i))^2\notag\\
=&\sum\limits_{n=1}^N\sum\limits_{m=1}^Mb_{nm}(\bar{\textbf{Y}}(n,i)-\bar{\textbf{Y}}(N+m,i))^2\notag\\
=&\frac{1}{2}(\sum\limits_{n=1}^N\sum\limits_{m=1}^Mb_{nm}(\bar{\textbf{Y}}(n,i)-\bar{\textbf{Y}}(N+m,i))^2\notag\\
&+\sum\limits_{m=1}^N\sum\limits_{n=1}^Mb_{mn}(\bar{\textbf{Y}}(N+n,i)-\bar{\textbf{Y}}(m,i))^2).
\end{align}
According to the definition of the augmented graph $\textbf{P}$ in Eq. \eqref{eq:P} and the important property \cite{von2007tutorial} of graph Laplacian $\textbf{L}=\textbf{D}-\textbf{P}$, we have that
\begin{align}
\label{eq:theorem_term1_var2}
\eqref{eq:theorem_term1_var1}=&\frac{1}{2}\sum\limits_{n=1}^{N+M}\sum\limits_{m=1}^{N+M}p_{nm}(\bar{y}_{ni}-\bar{y}_{mi})^2=\bar{\textbf{y}}_{:i}^\top\textbf{L}\bar{\textbf{y}}_{:i}.
\end{align}
From Eqs. \eqref{eq:theorem_term2} and \eqref{eq:theorem_term1_var2}, and the definition of the cut criterion in Eq. \eqref{eq:BiCut}, we have that
\begin{align}
&BipartiteGraphCut((\mathbb{S}_1,\mathbb{A}_1),\cdots,(\mathbb{S}_k,\mathbb{A}_k))\notag\\
=&\sum\limits_{i=1}^k\bar{\textbf{y}}_{:i}^\top\textbf{L}\bar{\textbf{y}}_{:i}-Tr(\bar{\textbf{Y}}^\top\textbf{D}\bar{\textbf{Y}})=Tr(\bar{\textbf{Y}}^\top\textbf{L}\bar{\textbf{Y}})-Tr(\bar{\textbf{Y}}^\top\textbf{D}\bar{\textbf{Y}})\notag\\
=&-Tr(\bar{\textbf{Y}}^\top\textbf{P}\bar{\textbf{Y}})=-2Tr(\bar{\textbf{Y}}_{(N)}^\top\textbf{B}\bar{\textbf{Y}}_{(M)})
\end{align}
Thereby, we prove the Theorem \ref{the:equivalent}.
\end{proof}

Furthermore, given the indicator matrices $\textbf{Y}\in\{0,1\}^{N\times k}$ and $\textbf{H}\in\{0,1\}^{M\times k}$, the Eq. \eqref{eq:OBCut} can be equivalently represented as follows:

\begin{equation}
\begin{aligned}
&\min~BipartiteGraphCut((\mathbb{S}_1,\mathbb{A}_1),\cdots,(\mathbb{S}_k,\mathbb{A}_k))\\
\Leftrightarrow&\left\{
\begin{aligned}
&\max_{\textbf{Y},\textbf{H}}~Tr((\textbf{Y}^\top\textbf{Y})^{-\frac{1}{2}}\textbf{Y}^\top\textbf{B}\textbf{H}(\textbf{H}^\top\textbf{H})^{-\frac{1}{2}})\\
&s.t.~\textbf{Y}\in\{0,1\}^{N\times k},\textbf{Y}\textbf{1}=\textbf{1};\textbf{H}\in\{0,1\}^{M\times k},\textbf{H}\textbf{1}=\textbf{1}.
\end{aligned}
\right.
\end{aligned}
\end{equation}

With $k\le M\ll N$, when $M=k$, we have $\textbf{H}^\top\textbf{H}=\textbf{I}$, and the following Corollary holds.

\begin{corollary}
Given a bipartite graph $\textbf{G}(\mathbb{S},\mathbb{A},\textbf{B})$, where $|\mathbb{S}|=N$ and $|\mathbb{A}|=M=k$, for the indicator $\textbf{Y}\in\{0,1\}^{N\times k}$ and $\textbf{H}\in\mathbb{R}^{M\times k}$, the following conclusion holds:
\begin{equation}
	\label{eq:cut_trace_form}
\begin{aligned}
&\min~BipartiteGraphCut((\mathbb{S}_1,\mathbb{A}_1),\cdots,(\mathbb{S}_k,\mathbb{A}_k))\\
\Leftrightarrow&\left\{
\begin{aligned}
&\max_{\textbf{Y},\textbf{H}}~Tr((\textbf{Y}^\top\textbf{Y})^{-\frac{1}{2}}\textbf{Y}^\top\textbf{B}\textbf{H})\\
&s.t.~\textbf{Y}\in\{0,1\}^{N\times k},\textbf{Y}\textbf{1}=\textbf{1};\textbf{H}^\top\textbf{H}=\textbf{I},\textbf{H}\ge0.
\end{aligned}
\right.
\end{aligned}
\end{equation}
\end{corollary}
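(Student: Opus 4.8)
The plan is to obtain the Corollary as a direct specialization of the $\{0,1\}$-indicator form of Theorem~\ref{the:equivalent} stated immediately above it, the only substantive work being to pin down the shape of the anchor indicator $\textbf{H}$ when $M=k$.

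First I would record the elementary bridge between normalized and binary indicators. If $\textbf{Y}\in\{0,1\}^{N\times k}$ satisfies $\textbf{Y}\textbf{1}=\textbf{1}$ with every cluster nonempty, then $\textbf{Y}^\top\textbf{Y}=\mathrm{diag}(|\mathbb{S}_1|,\dots,|\mathbb{S}_k|)$ is invertible and the $(i,j)$-entry of $\textbf{Y}(\textbf{Y}^\top\textbf{Y})^{-\frac{1}{2}}$ equals $1/\sqrt{|\mathbb{S}_j|}$ precisely when $i\in\mathbb{S}_j$, so $\textbf{Y}(\textbf{Y}^\top\textbf{Y})^{-\frac{1}{2}}$ is a member $\bar{\textbf{Y}}_{(N)}$ of $Nind(\mathbb{S},k)$ in the sense of Definition~\ref{def:Nind}; symmetrically $\textbf{H}(\textbf{H}^\top\textbf{H})^{-\frac{1}{2}}$ realizes a member $\bar{\textbf{Y}}_{(M)}$ of $Nind(\mathbb{A},k)$. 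Substituting $\bar{\textbf{Y}}_{(N)}=\textbf{Y}(\textbf{Y}^\top\textbf{Y})^{-\frac{1}{2}}$ and $\bar{\textbf{Y}}_{(M)}=\textbf{H}(\textbf{H}^\top\textbf{H})^{-\frac{1}{2}}$ into the trace objective of Theorem~\ref{the:equivalent} turns the maximization into one of $Tr((\textbf{Y}^\top\textbf{Y})^{-\frac{1}{2}}\textbf{Y}^\top\textbf{B}\textbf{H}(\textbf{H}^\top\textbf{H})^{-\frac{1}{2}})$ over binary indicators $\textbf{Y},\textbf{H}$ with unit row sums, which is exactly the intermediate equivalence preceding the Corollary.

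Next I would impose $M=k$. Because the criterion \eqref{eq:BiCut} is only meaningful when each $|\mathbb{A}_i|\ge1$ and $|\mathbb{A}|=M=k$, every anchor cluster contains exactly one anchor, so $\textbf{H}\in\{0,1\}^{k\times k}$ has a single nonzero in each row and in each column, i.e. $\textbf{H}$ is a permutation matrix; consequently $\textbf{H}^\top\textbf{H}=\textbf{I}$ and the factor $(\textbf{H}^\top\textbf{H})^{-\frac{1}{2}}$ drops out, leaving the objective $Tr((\textbf{Y}^\top\textbf{Y})^{-\frac{1}{2}}\textbf{Y}^\top\textbf{B}\textbf{H})$. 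The sample-side problem is untouched, so the remaining task is to justify writing the feasible set for $\textbf{H}$ in \eqref{eq:cut_trace_form} as the continuous set $\{\textbf{H}\in\mathbb{R}^{k\times k}:\textbf{H}^\top\textbf{H}=\textbf{I},\ \textbf{H}\ge0\}$ rather than as binary indicators.

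The crux --- and the step I expect to need a genuine argument rather than bookkeeping --- is to show these two descriptions of $\textbf{H}$ coincide, i.e. that a nonnegative $k\times k$ matrix with orthonormal columns must be a permutation matrix. I would reason: two distinct columns are nonnegative and mutually orthogonal, hence have disjoint supports; so the $k$ column supports are pairwise disjoint, nonempty (a zero column cannot have unit norm) subsets of the $k$ row indices, which forces each to be a singleton; a nonnegative unit vector supported on a single coordinate is a standard basis vector, and distinctness of the supports makes $\textbf{H}$ a permutation matrix. With this identity the continuous and binary $\textbf{H}$-problems share the same optimal value and optima, and chaining the three reductions back through Theorem~\ref{the:equivalent} yields \eqref{eq:cut_trace_form}.
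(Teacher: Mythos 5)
Your proposal is correct and follows the same route as the paper, which derives the corollary by specializing the binary-indicator form of Theorem~\ref{the:equivalent} to $M=k$ and observing that $\textbf{H}^\top\textbf{H}=\textbf{I}$. In fact you go further than the paper, which offers only a one-line remark: your argument that a nonnegative matrix with orthonormal columns must be a permutation matrix (disjoint supports via nonnegativity and orthogonality, hence singleton supports when $M=k$) is exactly the missing justification for replacing the binary constraint $\textbf{H}\in\{0,1\}^{M\times k},\textbf{H}\textbf{1}=\textbf{1}$ by the continuous feasible set $\textbf{H}^\top\textbf{H}=\textbf{I},\textbf{H}\ge0$ without changing the optimum.
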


\subsection{Unified Formulation of One-step Bipartite Graph Cut}\label{Unified Formulation}

In this section, we provide the unified formulation of our proposed OBCut approach, where the adaptive bipartite graph learning and the one-step normalized bipartite graph partitioning are simultaneously enforced.

Specifically, the bipartite graph $\textbf{B}$ can be learned via the objective \eqref{eq:Subspace}, and can be partitioned in a one-step manner via the proposed cut criterion in the objective~\eqref{eq:cut_trace_form}. By relaxing the objective~\eqref{eq:cut_trace_form}, we can rewrite it into a more concise optimization problem as
\begin{align}
\label{eq:relax-OBCut}
\max_{\textbf{Y},\textbf{H}}~&Tr((\textbf{Y}^\top\textbf{Y})^{-\frac{1}{2}}\textbf{Y}^\top\textbf{B}\textbf{H})\notag\\
s.t.~&\textbf{Y}\in\{0,1\}^{N\times k},\textbf{Y}\textbf{1}=\textbf{1};\textbf{H}^\top\textbf{H}=\textbf{I}.
\end{align}
Then we proceed to unify the adaptive anchor learning, the bipartite graph learning, and the one-step normalized bipartite graph partitioning in a joint learning framework. Formally, we have the overall objective function of OBCut as follows:
\begin{align}
\label{eq:obj-final}
&\min_{\textbf{A},\textbf{B},\textbf{Y},\textbf{H}}~||\textbf{X}-\textbf{A}\textbf{B}^\top||_F^2-\lambda Tr((\textbf{Y}^\top\textbf{Y})^{-\frac{1}{2}}\textbf{Y}^\top\textbf{B}\textbf{H})\notag\\
&s.t.~\textbf{B}\textbf{1}=\textbf{1},\textbf{B}\ge0;\textbf{Y}\in\{0,1\}^{N\times k},\textbf{Y}\textbf{1}=\textbf{1};\textbf{H}^\top\textbf{H}=\textbf{I}.
\end{align}
where $\lambda>0$ is a trade-off parameter between the anchor-based subspace learning and the bipartite graph cut.

\begin{theorem}
\label{the:upper}
Given a bipartite graph $\textbf{B}\in\mathbb{R}^{N\times M}$ with $|b_{ij}|\le\alpha$ ($\alpha>0$), let the sum of entries in $\textbf{y}_{:j}$ be denoted as $n_j$ ($n_j\ge1$). The upper bound of the objective function value in \eqref{eq:relax-OBCut} is $NM\alpha$.
\end{theorem}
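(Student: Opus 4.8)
The plan is to bound the trace $Tr((\textbf{Y}^\top\textbf{Y})^{-\frac{1}{2}}\textbf{Y}^\top\textbf{B}\textbf{H})$ by decomposing it entrywise and controlling each factor. First I would expand the trace as a sum over the $k$ columns: writing $\textbf{Q}=(\textbf{Y}^\top\textbf{Y})^{-\frac{1}{2}}$, which is the diagonal matrix with entries $n_j^{-1/2}$, the expression becomes $\sum_{j=1}^{k} n_j^{-1/2}\, \textbf{y}_{:j}^\top \textbf{B}\, \textbf{h}_{:j}$. Each term $\textbf{y}_{:j}^\top \textbf{B}\, \textbf{h}_{:j} = \sum_{n=1}^{N}\sum_{m=1}^{M} b_{nm}\, y_{nj}\, h_{mj}$, and I would bound this using $|b_{nm}|\le\alpha$, $0\le y_{nj}\le 1$ (so $\sum_n y_{nj}=n_j$), and the Cauchy--Schwarz / triangle-type estimate on $\textbf{h}_{:j}$.

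The key steps, in order, are: (i) diagonalize $(\textbf{Y}^\top\textbf{Y})^{-1/2}$ and reduce to the columnwise sum above; (ii) for each $j$, estimate $|\textbf{y}_{:j}^\top \textbf{B}\, \textbf{h}_{:j}| \le \alpha \sum_{n} y_{nj} \sum_{m} |h_{mj}| \le \alpha\, n_j \cdot \sqrt{M}\,\|\textbf{h}_{:j}\|_2$, using $\sum_m |h_{mj}| \le \sqrt{M}\,\|\textbf{h}_{:j}\|_2$ from Cauchy--Schwarz; (iii) invoke the constraint $\textbf{H}^\top\textbf{H}=\textbf{I}$, which forces $\|\textbf{h}_{:j}\|_2 = 1$ for every column; (iv) assemble the pieces to get each term bounded by $\alpha\, n_j^{1/2}\sqrt{M}$, and since $n_j \le N$, each term is at most $\alpha\sqrt{NM}$; summing over $k$ columns gives $k\,\alpha\sqrt{NM}$. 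Since $k\le M\ll N$, one can coarsely bound this by $M\alpha\sqrt{NM}$, and a further crude relaxation $\sqrt{NM}\le N$ (valid since $M\le N$)... here I would actually expect the intended argument to be looser still: bound $|b_{nm}|\le\alpha$ directly, note $\textbf{Y}$ has exactly one nonzero per row so $\|\textbf{Y}\|$-type quantities are controlled, and arrive at the stated $NM\alpha$ by a deliberately generous chain of inequalities rather than a tight one.

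The main obstacle I anticipate is reconciling the factor $(\textbf{Y}^\top\textbf{Y})^{-1/2}$ with the claimed bound: the natural tight bound from the argument above is of order $k\alpha\sqrt{NM}$ (or even just $k\alpha\sqrt{N}$ once $\|\textbf{h}_{:j}\|_2=1$ and $\sum_m|h_{mj}|\le\sqrt{M}$ are combined carefully), which is considerably smaller than $NM\alpha$. So the real work is not in proving something is an upper bound — almost any reasonable estimate beats $NM\alpha$ — but in matching the \emph{specific} form $NM\alpha$ that the theorem states, presumably because the authors want a bound expressed purely in terms of $N$, $M$, $\alpha$ without $k$ or square roots. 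I would therefore structure the proof to pass through the crudest valid chain: $Tr(\textbf{Q}\textbf{Y}^\top\textbf{B}\textbf{H}) \le \sum_{n,m} |b_{nm}| \le NM\alpha$ after arguing that the $\textbf{Y}$, $\textbf{H}$, and $\textbf{Q}$ factors only redistribute mass without amplifying it (each row of $\textbf{Y}$ sums to $1$, $\textbf{H}$ is orthonormal hence has operator norm $1$, and $\textbf{Q}$ has entries $\le 1$ since $n_j\ge 1$), so that the whole quantity cannot exceed the total absolute mass $\sum_{n,m}|b_{nm}|$ of $\textbf{B}$, which is at most $NM\alpha$. Verifying that this redistribution step is rigorous — i.e. that none of the three matrix factors can increase the $\ell_1$-mass — is the delicate point, and I would handle it by the explicit entrywise expansion rather than appealing to operator norms, since $\ell_1$-mass is not submultiplicative under arbitrary matrix products.
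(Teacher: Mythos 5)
Your proposal is correct and the final chain you settle on --- expand the trace columnwise, use $n_j^{-1/2}\le 1$ (since $n_j\ge 1$), $|h_{mj}|\le 1$ from the unit-norm columns forced by $\textbf{H}^\top\textbf{H}=\textbf{I}$, and $\sum_{j}n_j=N$ from the row-stochastic structure of $\textbf{Y}$ --- is exactly the paper's proof, which writes the objective as $\sum_{j=1}^{k}\textbf{y}_{:j}^\top\textbf{B}\textbf{h}_{:j}/\sqrt{n_j}\le\sum_{j=1}^{k}\textbf{y}_{:j}^\top\textbf{B}\textbf{h}_{:j}\le\sum_{j=1}^{k}n_jM\alpha=NM\alpha$. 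Your entrywise version with absolute values is if anything slightly more careful than the paper's, whose step $x/\sqrt{n_j}\le x$ tacitly assumes each $\textbf{y}_{:j}^\top\textbf{B}\textbf{h}_{:j}\ge 0$, and your observation that the bound is far from tight (a Cauchy--Schwarz argument gives order $k\alpha\sqrt{NM}$) is accurate but not pursued by the authors.
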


\begin{proof}
Since $\textbf{H}^\top\textbf{H}=\textbf{I}$, we have $|h_{ij}|\le1$, $\forall i,j$. And we know that $(\textbf{Y}^\top\textbf{Y})^{-\frac{1}{2}}=diag([\frac{1}{\sqrt{n_1}},\frac{1}{\sqrt{n_2}},
\cdots,\frac{1}{\sqrt{n_k}}])$. Thus, we have
\begin{align}
&Tr((\textbf{Y}^\top\textbf{Y})^{-\frac{1}{2}}\textbf{Y}^\top\textbf{B}\textbf{H})=Tr(\begin{bmatrix}
\frac{\textbf{y}_{:1}^\top\textbf{B}}{\sqrt{n_1}}\\
\frac{\textbf{y}_{:2}^\top\textbf{B}}{\sqrt{n_2}}\\
\vdots\\
\frac{\textbf{y}_{:k}^\top\textbf{B}}{\sqrt{n_k}}\\
\end{bmatrix}\begin{bmatrix}
\textbf{h}_{:1},\textbf{h}_{:2},\cdots,\textbf{h}_{:k}\\
\end{bmatrix})\notag\\
=&\sum\limits_{j=1}^k\frac{\textbf{y}_{:j}^\top\textbf{B}\textbf{h}_{:j}}{\sqrt{n_j}}\le
\sum\limits_{j=1}^k\textbf{y}_{:j}^\top\textbf{B}\textbf{h}_{:j}\le\sum\limits_{j=1}^kn_jM\alpha=NM\alpha.
\end{align}

\end{proof}
According to Theorem \ref{the:upper}, we know that the lower bound of the objective function value in the minimization problem \eqref{eq:obj-final} is $-\lambda NM$. Remarkably, the objective \eqref{eq:relax-OBCut} can serve as an add-on module, and can well be integrated into other bipartite graph learning models so as to provide the capability of one-step normalized bipartite graph partitioning.

\section{Optimization and Theoretical Analysis}\label{sec:opt}
In this section, we design an alternating optimization algorithm to minimize the objective function \eqref{eq:obj-final} in Section \ref{Optimization of Problem}, and analyze its computational complexity in Sections~\ref{Computational Complexity Analysis}.

\subsection{Optimization of Problem}\label{Optimization of Problem}

\subsubsection{Update $\textbf{Y}$}
With the other variables fixed, the subproblem that only relates to $\textbf{Y}$ can be written as
\begin{align}
	\label{eq:upY}
	\max_{\textbf{Y}}~&Tr((\textbf{Y}^\top\textbf{Y})^{-\frac{1}{2}}\textbf{Y}^\top\textbf{B}\textbf{H})\notag\\
	s.t.~&\textbf{Y}\in\{0,1\}^{N\times K},\textbf{Y}\textbf{1}=\textbf{1}.
\end{align}
Since $\textbf{Y}\in\{0,1\}^{N\times K}$ and $\textbf{Y}\textbf{1}=\textbf{1}$, we know that $(\textbf{Y}^\top\textbf{Y})^{-\frac{1}{2}}$ is a diagonal matrix\footnote{When $\textbf{y}_{:j}=\textbf{0}$, the $j$-th diagonal entry of the diagonal matrix $\textbf{Y}^\top\textbf{Y}$ is equal to $0$. The equation $\textbf{y}_{:j}=\textbf{0}$ indicates no sample belongs to the $j$-th cluster, which, however, is not desired and would cause the division-by-zero error in calculating $(\textbf{Y}^\top\textbf{Y})^{-\frac{1}{2}}$. To avoid this situation, we can use $(\textbf{Y}^\top\textbf{Y}+\epsilon\textbf{I})^{-\frac{1}{2}}$ instead of $(\textbf{Y}^\top\textbf{Y})^{-\frac{1}{2}}$, where $\epsilon>0$ is a very small constant. It is obvious that $(\textbf{Y}^\top\textbf{Y}+\epsilon\textbf{I})^{-\frac{1}{2}}\rightarrow(\textbf{Y}^\top\textbf{Y})^{-\frac{1}{2}}$ when $\epsilon\rightarrow0$.}. Let $\textbf{Q}=\textbf{B}\textbf{H}$, the subproblem \eqref{eq:upY} can be transformed as
\begin{align}
	\label{eq:upY1}
	\max_{\textbf{Y}}~&Tr((\textbf{Y}(\textbf{Y}^\top\textbf{Y})^{-\frac{1}{2}})^\top\textbf{Q})=\sum\limits_{j=1}^K\frac{\sum_{i=1}^Ny_{ij}q_{ij}}{\sqrt{\textbf{y}_{:j}^\top\textbf{y}_{:j}}}\notag\\
	s.t.~&\textbf{Y}\in\{0,1\}^{N\times K},\textbf{Y}\textbf{1}=\textbf{1}.
\end{align}
According to \cite{UOMvSC}, since $\sqrt{\textbf{y}_{:j}^\top\textbf{y}_{:j}}$ involves every row of $\textbf{Y}$, we can
optimize $\textbf{Y}$ row by row.

For updating the $i$-th row vector $\textbf{y}_{i:}$ in $\textbf{Y}$, we can explore the increase of the objective function values in objective \eqref{eq:upY1}, when $\textbf{y}_{i:}$ changes from $\textbf{y}_{i:}=\textbf{0}$ to $y_{ij}=1$. Whether the initial value of $y_{ij}$ is $0$ or $1$, the increment can be unified into the following form as
\begin{align}
\Delta_{ij}=\frac{\sum_{s=1}^Ny_{sj}q_{sj}+q_{ij}(1-y_{ij})}{\sqrt{\textbf{y}_{:j}^\top\textbf{y}_{:j}+(1-y_{ij})}}-\frac{\sum_{s=1}^Ny_{sj}q_{sj}+q_{ij}y_{ij}}{\sqrt{\textbf{y}_{:j}^\top\textbf{y}_{:j}-y_{ij}}}.
\end{align}
Then $\textbf{y}_{i:}$ can be updated as
\begin{align}
\label{eq:UpdateY}
y_{ij}=<j=\mathop{\arg\max}\limits_{l\in\{1,2,\cdots,K\}}\Delta_{il}>,
\end{align}
where $<\cdot>$ returns $0$ when the argument is false or $1$ otherwise.

\subsubsection{Update $\textbf{H}$}
With the other variables fixed, the subproblem that only relates to $\textbf{H}$ can be written as
\begin{align}
    \label{eq:upH}
    \max_{\textbf{H}^\top\textbf{H}=\textbf{I}}~&Tr(\textbf{H}^\top\textbf{B}^\top\textbf{Y}(\textbf{Y}^\top\textbf{Y})^{-\frac{1}{2}}).
\end{align}
According to \cite{OPPnie}, we can have the optimal solution to the objective \eqref{eq:upH} as
\begin{align}
\label{eq:UpdateH}
\textbf{H}=\textbf{U}\textbf{V}^\top,
\end{align}
where $\textbf{U}\in\mathbb{R}^{M\times K}$ and $\textbf{V}\in\mathbb{R}^{K\times K}$ are obtained from the compact SVD \cite{GPInie} of $\textbf{B}^\top\textbf{Y}(\textbf{Y}^\top\textbf{Y})^{-\frac{1}{2}}=\textbf{U}\Sigma \textbf{V}^\top$.

\subsubsection{Update $\textbf{B}$}
With the other variables fixed, the subproblem that only relates to $\textbf{B}$ can be written as
\begin{align}
\label{eq:upP}
\min_{\textbf{B}}~&||\textbf{X}-\textbf{A}\textbf{B}^\top||_F^2-\lambda Tr((\textbf{Y}^\top\textbf{Y})^{-\frac{1}{2}}\textbf{Y}^\top\textbf{B}\textbf{H})\notag\\
s.t.~&\textbf{B}\textbf{1}=\textbf{1},\textbf{B}\ge0.
\end{align}
Since the optimization of each row of $\textbf{B}$ in subproblem \eqref{eq:upP} is independent, we can optimize $\textbf{B}$ in a row-by-row manner.

Let $\hat{\textbf{Q}}=\textbf{H}(\textbf{Y}^\top\textbf{Y})^{-\frac{1}{2}}\textbf{Y}^\top$, the optimization of the $i$-th row of $\textbf{B}$ (i.e. $\textbf{b}_{i:}$) can be formulated as follows:
\begin{align}
\label{eq:upP1}
\min_{\textbf{b}_{i:}}~&||\textbf{x}_{:i}-\textbf{A}\textbf{b}_{i:}^\top||_2^2-\lambda\textbf{b}_{i:}\hat{\textbf{q}}_{:i}\notag\\
s.t.~&\textbf{b}_{i:}\textbf{1}=1,\textbf{b}_{i:}\ge0.
\end{align}

Thereafter, the optimization problem \eqref{eq:upP1} can be rewritten as the following Quadratic Programming problem
\begin{align}
\label{eq:upP2}
\min_{\textbf{b}_{i:}}~&\textbf{b}_{i:}\hat{\textbf{H}}\textbf{b}_{i:}^\top-\textbf{b}_{i:}\textbf{f}\notag\\
s.t.~&\textbf{b}_{i:}\textbf{1}=1,\textbf{b}_{i:}\ge0,
\end{align}
where $\hat{\textbf{H}}=\textbf{A}^\top\textbf{A}$, and $\textbf{f}=2\textbf{A}^\top\textbf{x}_{:i}+\lambda\hat{\textbf{q}}_{:i}$. According to \cite{SFMC}, the problem \eqref{eq:upP2} can be optimized by the augmented Lagrangian multiplier (ALM) method.

\subsubsection{Update $\textbf{A}$}
With the other variables fixed, the subproblem that only relates to $\textbf{A}$ can be written as
\begin{align}
	\label{eq:upA}
	\min_{\textbf{A}}~&||\textbf{X}-\textbf{A}\textbf{B}^\top||_F^2.	
\end{align}

It is obvious that the subproblem \eqref{eq:upA} is a convex optimization problem \cite{LAPIN}. The optimal solution can be obtained by setting the derivative w.r.t. $\textbf{A}$ to zero, so we have\footnote{When $\textbf{B}^\top\textbf{B}$ is not invertible, we can use the Moore-Penrose inverse $(\textbf{B}^\top\textbf{B})^{+}$.}
\begin{align}
\label{eq:UpdateA}
	\textbf{A}=\textbf{X}\textbf{B}(\textbf{B}^\top\textbf{B})^{-1}.
\end{align}

For clarity, the overall process of our OBCut approach is described in Algorithm~\ref{algorithm}.

\begin{algorithm}
\renewcommand{\algorithmicrequire}{\textbf{Input:}}
\renewcommand{\algorithmicensure}{\textbf{Preparation:}}
\caption{One-step bipartite graph cut (OBCut)}\label{algorithm}
\begin{algorithmic}[1]
\REQUIRE The normalized data matrix $\textbf{X}$, the cluster numbers $k$, the number of anchors $M\ge k$, and parameter $\lambda>0$.
\renewcommand{\algorithmicensure}{\textbf{Initialization:}}
\ENSURE  Use $k$-means to initialize $\textbf{A}$ and $\textbf{Y}$. Initialize $\textbf{B}$ by solving problem  \eqref{eq:Subspace} and $K$-nearest-neighbor ($K$-NN) bipartite graph ($K=5$). Initialize $\textbf{H}$ by Eq. \eqref{eq:UpdateH}.
\REPEAT
    \STATE Update $\textbf{Y}$ by Eq. \eqref{eq:UpdateY}.
    \STATE Update $\textbf{H}$ by Eq. \eqref{eq:UpdateH}.
    \STATE Update $\textbf{B}$ by solving problem \eqref{eq:upP2}.
    \STATE Update $\textbf{A}$ by Eq. \eqref{eq:UpdateA}.
\UNTIL{Convergence or maximum iteration reached}
\renewcommand{\algorithmicensure}{\textbf{Output:}}
\ENSURE  The clustering results $\textbf{Y}$.
\end{algorithmic}
\end{algorithm}

\subsection{Computational Complexity Analysis}\label{Computational Complexity Analysis}
In this section, we analyze the computational complexity of our OBCut approach. First, the initialization of the anchor set by $k$-means takes $O(NMdt_1)$ time, where $t_1$ is the number of iterations. The initialization of the bipartite graph takes $O(NMK)$ time \cite{LSC}, where $K$ is the number of nearest neighbors. In each iteration, it takes $O(Nk)$ time to update $\textbf{Y}$. When updating $\textbf{H}$, it takes $O(MNk)$ time to calculate $\textbf{B}^\top\textbf{Y}(\textbf{Y}^\top\textbf{Y})^{-\frac{1}{2}}$, and $O(Mk^2)$ time to perform the SVD and update $\textbf{H}$. Thus, the total computational complexity of updating $\textbf{H}$ is $O(Mk^2+MNk)$. When solving the problem \eqref{eq:upP2}, it takes $O(NMk)$ time to calculate $\hat{\textbf{Q}}$. It costs $O(NM^2d)$ time to update $\textbf{B}$, so the total time complexity of updating $\textbf{B}$ is $O(NMk+NM^2d)$. For updating $\textbf{A}$, it costs $O(dNM+dM^2+NM^2+M^3)$ time in each iteration.

Therefore, the overall computational complexity of OBCut is $O(NMK+NMdt_1+t_2(NMk+NM^2d+Mk^2+M^3))$, where $t_2$ is the number of iterations. With $K$, $k$, $t_1$, $t_2$ being small constants and $M\ll N$, the computational complexity of OBCut is linear to the number of samples $N$.

\section{Experiments}\label{sec:experiments}
In this section, we conduct experiments to evaluate the the proposed OBCut approach against several state-of-the-art subspace/spectral clustering approaches on multiple real-world datasets. All experiments are conducted on a computer with an Intel i9-12900KF CPU and 16GB of RAM.

\subsection{Datasets and Evaluation Metrics}
In the experiments, we evaluate the proposed method and the baseline methods on eight real-world datasets, namely, Leeds \cite{Leeds}, MPEG-7 \cite{Mpeg7}, Yale, News Group-20 (NG-20) \cite{NG20}, Abalone \cite{Abalone}, Letter Recognition (LR) \cite{ECPCS}, Youtube Faces-50 (YTF-50) \cite{FastMICE}, and Youtube Faces-100 (YTF-100) \cite{FastMICE}, whose data sizes range from 832 to 195,537. The details of these benchmark datasets are given in Table \ref{tab:datasets}.

To quantitatively compare the clustering results by different methods, we adopt three well-known evaluation metrics, i.e., the normalized mutual information (NMI) \cite{FastMICE}, the accuracy (ACC) \cite{UDBGL}, and the purity (PUR) \cite{Liang2022}. For these three metrics, larger values indicate better clustering results. 

\begin{table}[!t]
	\centering
	\caption{Description of the benchmark datasets}\vskip -0.08 in
	\label{tab:datasets}
	\renewcommand\arraystretch{1.2}
	\begin{tabular}{p{1.7cm}<{\centering}|p{1.8cm}<{\centering}p{1.4cm}<{\centering}p{1.3cm}<{\centering}}
		\toprule
		Dataset &\#Sample    &Dimension  &\#Class \\
		\midrule
		Leeds	&832	&30,000	&10	\\
		MPEG-7	&1,400	&6,000	&70	\\
		Yale	&1,755	&1,200	&3	\\
		NG-20	&3,970	&8,014	&4	\\
		Abalone	&4,177	&7	&28	\\
		LR	&20,000	&16	&26	\\
		YTF-50	&126,054	&512	&50	\\
		YTF-100	&195,537	&512	&100	\\
		\bottomrule
	\end{tabular}\vskip -0.1in
\end{table}

\subsection{Baseline Methods and Experimental Settings}

In our experiments, we compare OBCut against nine spectral clustering and subspace clustering methods, which are listed below.
\begin{itemize}
	\item \textbf{SSC} \cite{SSC}: Sparse subspace clustering.
	\item \textbf{ESCG} \cite{ESCG}: Efficient spectral clustering on graphs.
	\item \textbf{SSC-OMP} \cite{You2016CVPR}: Sparse subspace clustering by orthogonal matching pursuit.
	\item \textbf{LSC} \cite{LSC}: Landmark-based spectral clustering.
	\item \textbf{FastESC} \cite{FastESC}: Fast explicit spectral clustering.
	\item \textbf{EulerSC} \cite{EulerSC}: Euler spectral clustering.
	\item \textbf{U-SPEC} \cite{U-SPEC}: Ultra-scalable spectral clustering.
	\item \textbf{RKSC} \cite{RKSC}: Refined $K$-nearest neighbor graph for spectral clustering.
	\item \textbf{DCDP-ASC} \cite{DCDP-ASC}: Approximate spectral clustering based on dense
	cores and density peaks.
\end{itemize}

Given a dataset, we perform each test method 20 times and report its average scores (w.r.t. NMI, ACC, and PUR). For our OBCut method, the number of anchors $M=100$ is used for all the datasets, and its trade-off parameter is tuned in the range of $[10^{-5},10^{-4},\cdots,10^{5}]$. Similarly, the hyper-parameters in the baseline methods are also tuned in the range of $[10^{-5},10^{-4},\cdots,10^{5}]$, unless some specific tuning range is given in their corresponding papers.

\begin{table*}[!th]\vskip -0.08 in
	\centering
	\newcommand{\tabincell}[2]{\begin{tabular}{@{}#1@{}}#2\end{tabular}}
	\caption{Average NMI(\%) over 20 runs by different clustering methods. The best two scores in each row are highlighted in \textbf{bold}, while the best one in [\textbf{bold and brackets}].}\vskip -0.08in
	\label{tab:NMI}
		\renewcommand\arraystretch{1.01}
		\resizebox{\textwidth}{!}{
			\begin{tabular}{m{1.1cm}|m{1.3cm}<{\centering}m{1.3cm}<{\centering}m{1.3cm}<{\centering}m{1.3cm}<{\centering}m{1.3cm}<{\centering}m{1.3cm}<{\centering}m{1.3cm}<{\centering}m{1.4cm}<{\centering}m{1.45cm}<{\centering}|m{1.46cm}<{\centering}}
				\toprule
				Datasets	&SSC	&ESCG	&SSC-OMP	&LSC	&FastESC	&EulerSC	&U-SPEC	&RKSC	&DCDP-ASC	&\textbf{OBCut}	\\
				\midrule
				Leeds	&N/A	&\textbf{13.05}$_{\pm1.01}$	&7.45$_{\pm0.23}$	&11.70$_{\pm1.53}$	&9.99$_{\pm1.48}$	&6.71$_{\pm0.71}$	&12.71$_{\pm0.60}$	&12.59$_{\pm0.88}$	&1.28$_{\pm0.00}$	&[\textbf{27.71}$_{\pm0.00}$]	\\
				MPEG-7	&70.39$_{\pm0.81}$	&61.11$_{\pm1.52}$	&[\textbf{73.30}$_{\pm0.60}$]	&60.67$_{\pm1.11}$	&47.64$_{\pm1.45}$	&60.56$_{\pm0.70}$	&63.85$_{\pm1.13}$	&58.38$_{\pm2.07}$	&6.85$_{\pm0.00}$	&\textbf{71.96}$_{\pm0.00}$	\\
				Yale	&[\textbf{100.00}$_{\pm0.00}$]	&[\textbf{100.00}$_{\pm0.00}$]	&[\textbf{100.00}$_{\pm0.00}$]	&80.25$_{\pm15.42}$	&97.94$_{\pm4.08}$	&78.90$_{\pm0.00}$	&64.25$_{\pm15.41}$	&72.89$_{\pm0.00}$	&[\textbf{100.00}$_{\pm0.00}$]	&[\textbf{100.00}$_{\pm0.00}$]	\\
				NG-20	&0.42$_{\pm0.00}$	&0.31$_{\pm0.11}$	&\textbf{9.96}$_{\pm5.43}$	&1.47$_{\pm1.13}$	&0.09$_{\pm0.02}$	&0.08$_{\pm0.04}$	&0.16$_{\pm0.17}$	&0.63$_{\pm0.00}$	&0.45$_{\pm0.00}$	&[\textbf{18.83}$_{\pm0.00}$]	\\
				Abalone	&13.09$_{\pm0.11}$	&16.59$_{\pm0.42}$	&6.86$_{\pm0.22}$	&15.13$_{\pm0.25}$	&16.64$_{\pm2.21}$	&7.26$_{\pm0.32}$	&15.38$_{\pm0.26}$	&0.65$_{\pm0.00}$	&\textbf{16.80}$_{\pm0.00}$	&[\textbf{18.07}$_{\pm0.00}$]	\\
				LR	&N/A	&31.24$_{\pm1.30}$	&2.51$_{\pm0.13}$	&33.06$_{\pm1.31}$	&34.72$_{\pm5.93}$	&24.71$_{\pm0.86}$	&\textbf{34.86}$_{\pm0.71}$	&7.92$_{\pm0.05}$	&22.39$_{\pm0.00}$	&[\textbf{37.43}$_{\pm0.00}$]	\\
				YTF-50	&N/A	&N/A	&4.28$_{\pm0.13}$	&67.13$_{\pm1.76}$	&71.32$_{\pm6.38}$	&17.25$_{\pm1.85}$	&\textbf{74.92}$_{\pm0.98}$	&69.28$_{\pm0.00}$	&51.44$_{\pm0.00}$	&[\textbf{82.60}$_{\pm0.00}$]	\\
				YTF-100	&N/A	&N/A	&3.89$_{\pm0.22}$	&58.86$_{\pm0.90}$	&\textbf{68.17}$_{\pm0.74}$	&66.59$_{\pm0.44}$	&66.13$_{\pm0.75}$	&53.06$_{\pm0.00}$	&7.77$_{\pm0.00}$	&[\textbf{81.53}$_{\pm0.00}$]	\\
				\midrule
				Avg.score	&-	&-	&26.03	&41.03	&\textbf{43.31}	&32.76	&41.53	&34.42	&25.87	&[\textbf{54.77}]	\\
				Avg.rank	&6.88	&5.25	&5.63	&5.13	&5.13	&7.13	&\textbf{4.75}	&6.63	&5.88	&[\textbf{1.13}]	\\
				\bottomrule
			\end{tabular}
		}
	\begin{tablenotes}
		\footnotesize
		\item[1] Note that N/A indicates the out-of-memory error.
	\end{tablenotes}
\end{table*}

\begin{table*}[!th]
	\centering
	\newcommand{\tabincell}[2]{\begin{tabular}{@{}#1@{}}#2\end{tabular}}
	\caption{Average ACC(\%) over 20 runs by different clustering methods. The best two scores in each row are highlighted in \textbf{bold}, while the best one in [\textbf{bold and brackets}].}\vskip -0.08in
	\label{tab:ACC}
		\renewcommand\arraystretch{1.01}
		\resizebox{\textwidth}{!}{
			\begin{tabular}{m{1.1cm}|m{1.3cm}<{\centering}m{1.3cm}<{\centering}m{1.3cm}<{\centering}m{1.3cm}<{\centering}m{1.3cm}<{\centering}m{1.3cm}<{\centering}m{1.3cm}<{\centering}m{1.4cm}<{\centering}m{1.45cm}<{\centering}|m{1.46cm}<{\centering}}
				\toprule
				Datasets	&SSC	&ESCG	&SSC-OMP	&LSC	&FastESC	&EulerSC	&U-SPEC	&RKSC	&DCDP-ASC	&\textbf{OBCut}	\\
				\midrule
				Leeds	&N/A	&23.73$_{\pm0.85}$	&17.92$_{\pm0.43}$	&22.82$_{\pm1.03}$	&21.53$_{\pm1.76}$	&19.32$_{\pm0.96}$	&22.99$_{\pm1.05}$	&\textbf{23.97}$_{\pm0.95}$	&12.74$_{\pm0.00}$	&[\textbf{27.16}$_{\pm0.00}$]	\\
				MPEG-7	&50.77$_{\pm1.16}$	&42.62$_{\pm1.41}$	&[\textbf{55.00}$_{\pm1.07}$]	&39.04$_{\pm1.69}$	&32.29$_{\pm2.08}$	&42.55$_{\pm1.12}$	&44.50$_{\pm1.17}$	&40.40$_{\pm2.64}$	&4.43$_{\pm0.00}$	&\textbf{54.36}$_{\pm0.00}$	\\
				Yale	&[\textbf{100.00}$_{\pm0.00}$]	&[\textbf{100.00}$_{\pm0.00}$]	&[\textbf{100.00}$_{\pm0.00}$]	&86.59$_{\pm14.15}$	&99.47$_{\pm1.36}$	&92.93$_{\pm0.00}$	&59.07$_{\pm17.64}$	&85.13$_{\pm0.00}$	&[\textbf{100.00}$_{\pm0.00}$]	&[\textbf{100.00}$_{\pm0.00}$]	\\
				NG-20	&25.29$_{\pm0.00}$	&27.52$_{\pm0.34}$	&\textbf{33.41}$_{\pm4.15}$	&28.02$_{\pm2.16}$	&25.13$_{\pm0.03}$	&26.31$_{\pm0.43}$	&25.25$_{\pm0.25}$	&25.57$_{\pm0.00}$	&28.44$_{\pm0.00}$	&[\textbf{40.23}$_{\pm0.00}$]	\\
				Abalone	&12.26$_{\pm0.15}$	&15.91$_{\pm1.14}$	&13.72$_{\pm0.47}$	&13.18$_{\pm0.42}$	&\textbf{21.42}$_{\pm3.09}$	&13.13$_{\pm0.53}$	&13.79$_{\pm0.65}$	&16.57$_{\pm0.00}$	&19.56$_{\pm0.00}$	&[\textbf{23.22}$_{\pm0.00}$]	\\
				LR	&N/A	&24.24$_{\pm1.13}$	&5.80$_{\pm0.09}$	&25.86$_{\pm1.83}$	&\textbf{26.66}$_{\pm4.13}$	&22.65$_{\pm1.03}$	&[\textbf{26.86}$_{\pm1.29}$]	&9.68$_{\pm0.01}$	&18.34$_{\pm0.00}$	&26.51$_{\pm0.00}$	\\
				YTF-50	&N/A	&N/A	&6.14$_{\pm0.08}$	&54.39$_{\pm3.31}$	&59.09$_{\pm5.88}$	&22.85$_{\pm1.74}$	&\textbf{65.62}$_{\pm2.00}$	&62.85$_{\pm0.00}$	&41.04$_{\pm0.00}$	&[\textbf{68.07}$_{\pm0.00}$]	\\
				YTF-100	&N/A	&N/A	&4.30$_{\pm0.04}$	&39.53$_{\pm1.58}$	&53.63$_{\pm1.65}$	&\textbf{56.33}$_{\pm1.23}$	&49.49$_{\pm1.39}$	&45.43$_{\pm0.00}$	&8.55$_{\pm0.00}$	&[\textbf{67.74}$_{\pm0.00}$]	\\
				\midrule
				Avg.score	&-	&-	&29.54	&38.68	&\textbf{42.40}	&37.01	&38.45	&38.70	&29.14	&[\textbf{50.91}]	\\
				Avg.rank	&7.50	&5.25	&5.50	&6.00	&5.25	&6.25	&\textbf{5.00}	&5.63	&5.75	&[\textbf{1.38}]	\\
				\bottomrule
			\end{tabular}
		}
	\begin{tablenotes}
		\footnotesize
		\item[1] Note that N/A indicates the out-of-memory error.
	\end{tablenotes}
\end{table*}

\begin{table*}[!th]
	\centering
	\newcommand{\tabincell}[2]{\begin{tabular}{@{}#1@{}}#2\end{tabular}}
	\caption{Average PUR(\%) over 20 runs by different clustering methods. The best two scores in each row are highlighted in \textbf{bold}, while the best one in [\textbf{bold and brackets}].}\vskip -0.08in
	\label{tab:PUR}
		\renewcommand\arraystretch{1.01}
		\resizebox{\textwidth}{!}{
			\begin{tabular}{m{1.1cm}|m{1.3cm}<{\centering}m{1.3cm}<{\centering}m{1.3cm}<{\centering}m{1.3cm}<{\centering}m{1.3cm}<{\centering}m{1.3cm}<{\centering}m{1.3cm}<{\centering}m{1.4cm}<{\centering}m{1.45cm}<{\centering}|m{1.46cm}<{\centering}}
				\toprule
				Datasets	&SSC	&ESCG	&SSC-OMP	&LSC	&FastESC	&EulerSC	&U-SPEC	&RKSC	&DCDP-ASC	&\textbf{OBCut}	\\
				\midrule
				Leeds	&N/A	&\textbf{27.30}$_{\pm0.98}$	&20.84$_{\pm0.46}$	&26.02$_{\pm1.32}$	&24.07$_{\pm2.07}$	&20.90$_{\pm0.87}$	&26.47$_{\pm0.78}$	&26.95$_{\pm0.83}$	&13.10$_{\pm0.00}$	&[\textbf{29.09}$_{\pm0.00}$]	\\
				MPEG-7	&57.01$_{\pm0.90}$	&46.47$_{\pm1.42}$	&\textbf{58.68}$_{\pm0.97}$	&41.93$_{\pm1.50}$	&34.73$_{\pm1.88}$	&43.79$_{\pm1.16}$	&47.72$_{\pm1.28}$	&46.38$_{\pm2.37}$	&7.93$_{\pm0.00}$	&[\textbf{59.14}$_{\pm0.00}$]	\\
				Yale	&[\textbf{100.00}$_{\pm0.00}$]	&[\textbf{100.00}$_{\pm0.00}$]	&[\textbf{100.00}$_{\pm0.00}$]	&88.07$_{\pm10.62}$	&99.47$_{\pm1.36}$	&92.93$_{\pm0.00}$	&71.67$_{\pm12.21}$	&85.13$_{\pm0.00}$	&[\textbf{100.00}$_{\pm0.00}$]	&[\textbf{100.00}$_{\pm0.00}$]	\\
				NG-20	&25.47$_{\pm0.00}$	&27.57$_{\pm0.40}$	&\textbf{36.57}$_{\pm5.57}$	&28.24$_{\pm2.19}$	&25.16$_{\pm0.02}$	&26.46$_{\pm0.40}$	&25.29$_{\pm0.27}$	&25.74$_{\pm0.00}$	&28.54$_{\pm0.00}$	&[\textbf{41.74}$_{\pm0.00}$]	\\
				Abalone	&24.31$_{\pm0.25}$	&\textbf{27.75}$_{\pm0.32}$	&19.97$_{\pm0.26}$	&27.71$_{\pm0.36}$	&25.78$_{\pm1.56}$	&19.99$_{\pm0.35}$	&[\textbf{27.77}$_{\pm0.28}$]	&17.00$_{\pm0.00}$	&26.67$_{\pm0.00}$	&26.12$_{\pm0.00}$	\\
				LR	&N/A	&27.24$_{\pm1.06}$	&6.52$_{\pm0.10}$	&27.75$_{\pm1.81}$	&\textbf{29.03}$_{\pm4.40}$	&24.09$_{\pm0.80}$	&28.74$_{\pm1.19}$	&11.01$_{\pm0.05}$	&18.91$_{\pm0.00}$	&[\textbf{30.18}$_{\pm0.00}$]	\\
				YTF-50	&N/A	&N/A	&7.32$_{\pm0.06}$	&59.97$_{\pm2.69}$	&64.34$_{\pm5.90}$	&24.34$_{\pm1.71}$	&\textbf{68.73}$_{\pm1.67}$	&67.74$_{\pm0.00}$	&42.58$_{\pm0.00}$	&[\textbf{76.82}$_{\pm0.00}$]	\\
				YTF-100	&N/A	&N/A	&5.27$_{\pm0.05}$	&45.34$_{\pm1.46}$	&58.97$_{\pm1.45}$	&\textbf{62.34}$_{\pm1.21}$	&54.21$_{\pm1.28}$	&51.41$_{\pm0.00}$	&9.19$_{\pm0.00}$	&[\textbf{73.88}$_{\pm0.00}$]	\\
				\midrule
				Avg.score	&-	&-	&31.90	&43.13	&\textbf{45.19}	&39.36	&43.83	&41.42	&30.87	&[\textbf{54.62}]	\\
				Avg.rank	&7.13	&4.75	&5.88	&5.38	&5.75	&6.25	&\textbf{4.63}	&6.38	&5.88	&[\textbf{1.50}]	\\
				\bottomrule
			\end{tabular}
		}
	\begin{tablenotes}
		\footnotesize
		\item[1] Note that N/A indicates the out-of-memory error.
	\end{tablenotes}
\end{table*}

\subsection{Comparison Results and Analysis}
In this section, we report and analyze the comparison results of OBCut and the nine baseline subspace/spectral clustering methods on the eight benchmark datasets. The clustering scores w.r.t. NMI, ACC, and PUR are given in Tables~\ref{tab:NMI}, \ref{tab:ACC}, and \ref{tab:PUR}, respectively. 

As shown in Table \ref{tab:NMI}, OBCut achieves the best NMI scores on seven out of the eight datasets. Although SSC-OMP outperforms OBCut on the MPEG-7 dataset w.r.t. NMI, yet on all the other seven datasets OBCut yields better or significantly better clustering performance than SSC-OMP. 
In comparison with the other bipartite graph based methods (namely, LSC and U-SPEC), whose anchors are fixed after initialization, our OBCut method can automatically learn a set of anchors during the optimization process and exhibits better NMI scores than LSC and U-SPEC on all the eight datasets. The influence of the joint learning of the anchors and the bipartite graph in OBCut will further be evaluated in Section~\ref{sec:ablation1}.

As shown in Tables \ref{tab:ACC} and \ref{tab:PUR}, in terms of ACC and PUR, OBCut is also able to produce the best or almost the best clustering performance on most of the benchmark datasets. Besides the comparison of the clustering scores on each dataset, we further provide the average ranks and average scores (across all datasets) by different clustering methods at the bottoms of Tables~\ref{tab:NMI}, \ref{tab:ACC}, and \ref{tab:PUR}. In terms of the average score, our OBCut method obtains average scores (across all datasets) of 54.77, 50.91, and 54.62, w.r.t. NMI(\%), ACC(\%), and PUR(\%), respectively, while the second best method only achieves average scores of 43.31, 42.40, and 45.19, respectively. In terms of the average rank, OBCut obtains average ranks of 1.13, 1.38, and 1.50, w.r.t. NMI(\%), ACC(\%), and PUR(\%), respectively, which substantially outperforms the second best method whose average ranks are 4.75, 5.00, and 4.63, respectively. To summarize, the comparison results in Tables~\ref{tab:NMI}, \ref{tab:ACC}, and \ref{tab:PUR} have confirmed the advantageous clustering performance of the proposed OBCut method over the state-of-the-art subspace/spectral clustering methods.

\subsection{Parameter Analysis}
\label{sec:parameter}
In this section, we test the influence of the trade-off parameter $\lambda$ and the number of anchors $M$ in OBCut on four benchmark datasets.

\begin{table}
	\centering 
	\caption{The clustering performance of OBCut with varying values of parameters $\lambda$ and $M$ on the benchmark datasets.}\vskip -0.05 in
	\label{tab:sensitivity}
	\begin{threeparttable}
		\begin{tabular}{m{0.88cm}<{\centering}|m{1.45cm}<{\centering}m{1.45cm}<{\centering}m{1.45cm}<{\centering}m{1.55cm}<{\centering}}
			\toprule
			Dataset   &MPEG-7  &Yale  &Abalone &LR \\
			\midrule
			\multirow{1}{*}{NMI(\%)}
			&\includegraphics[width=1.8cm]{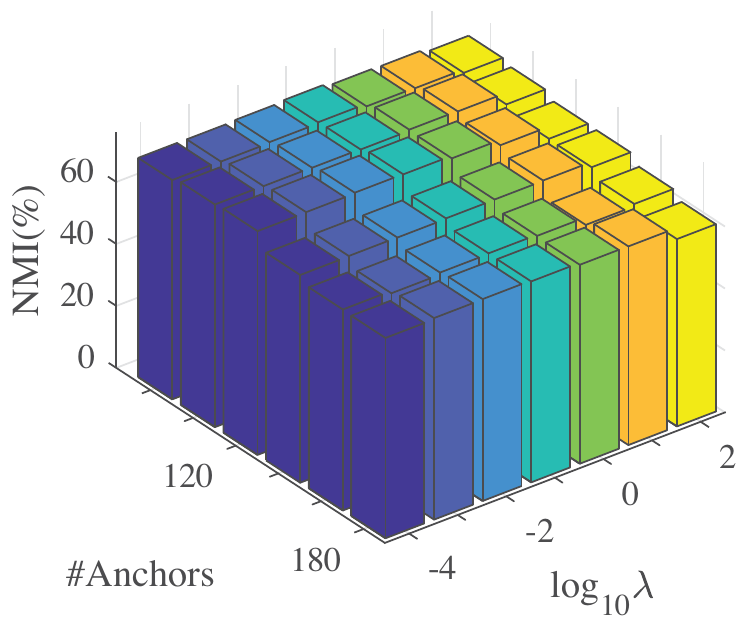}
			&\includegraphics[width=1.8cm]{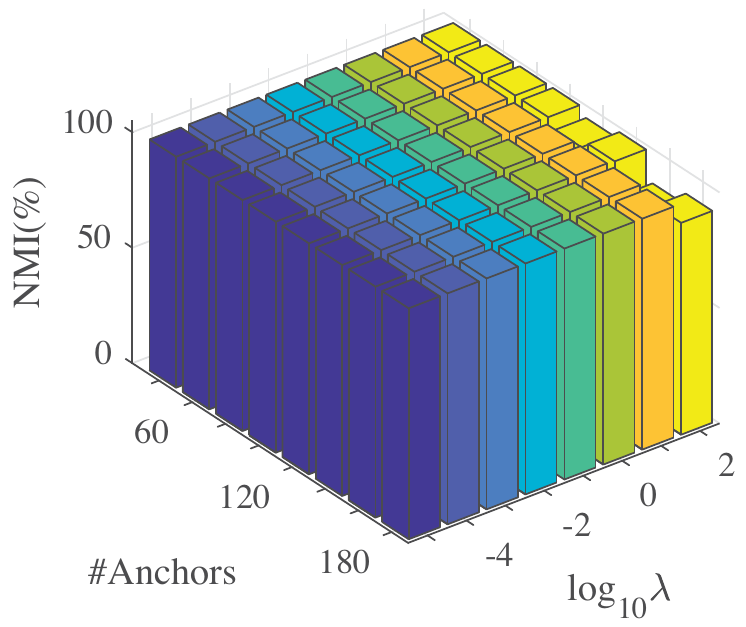}
			&\includegraphics[width=1.8cm]{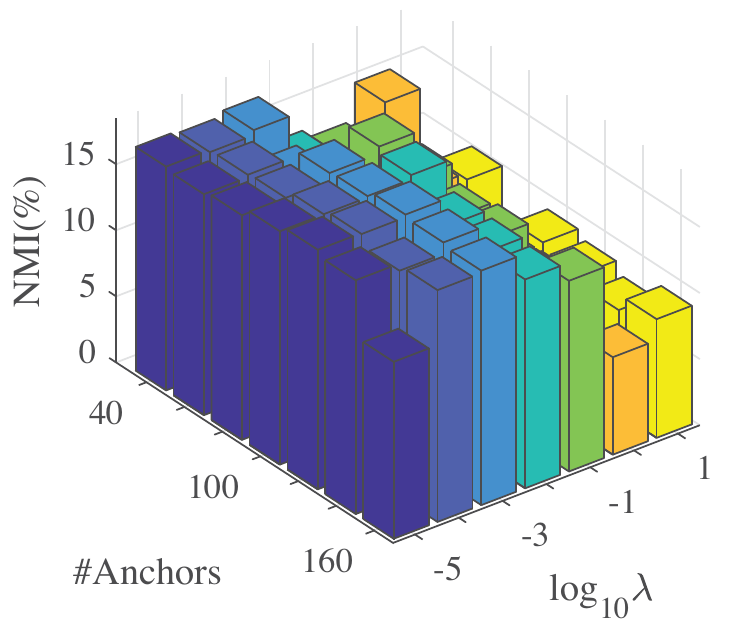}
			&\includegraphics[width=1.8cm]{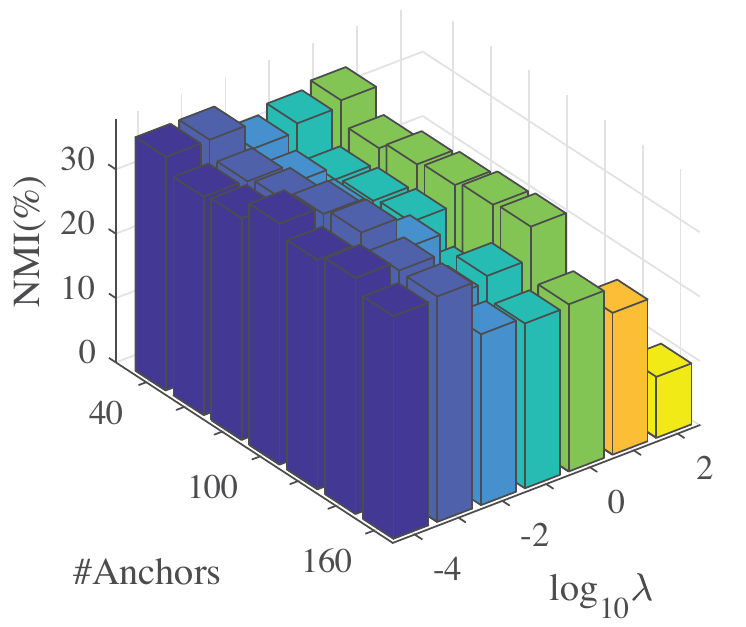}\\
			ACC(\%)
			&\includegraphics[width=1.8cm]{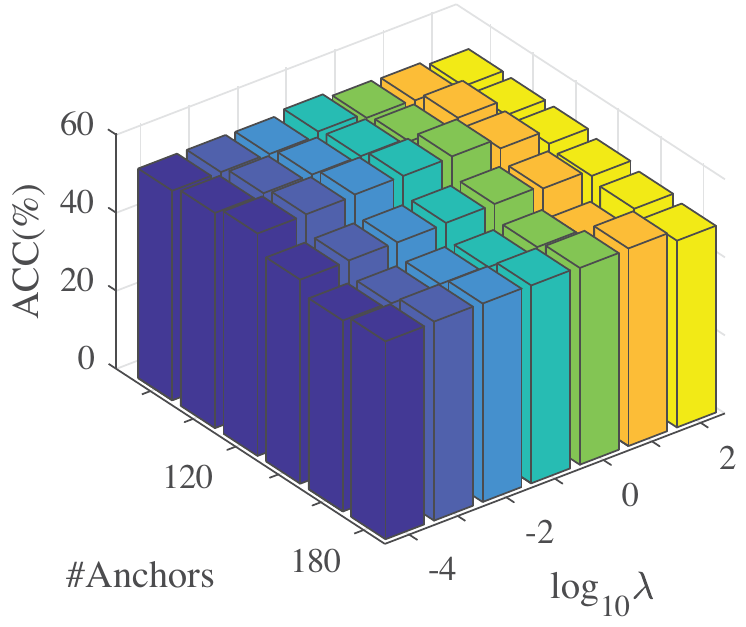}
			&\includegraphics[width=1.8cm]{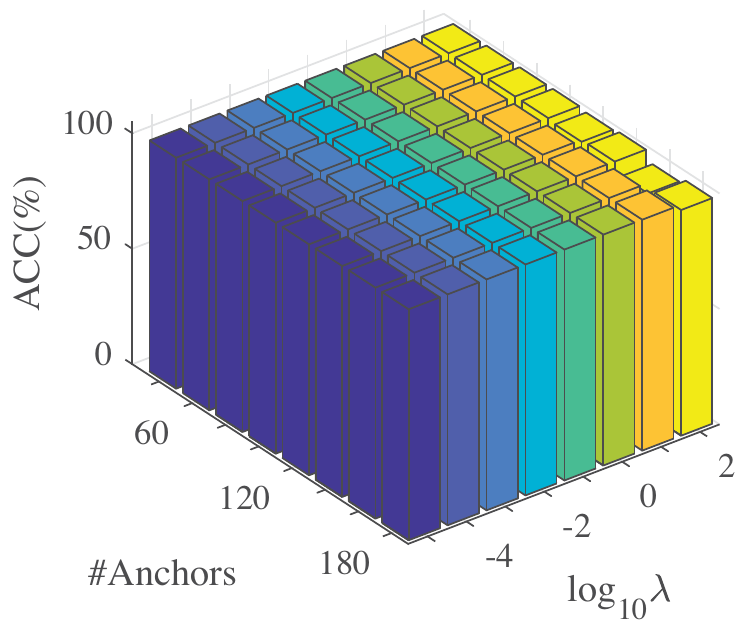}
			&\includegraphics[width=1.8cm]{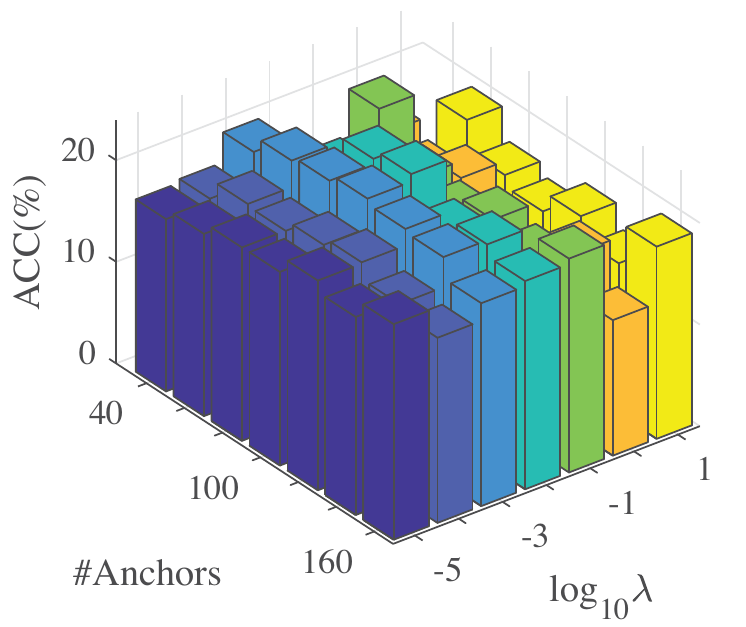}
			&\includegraphics[width=1.8cm]{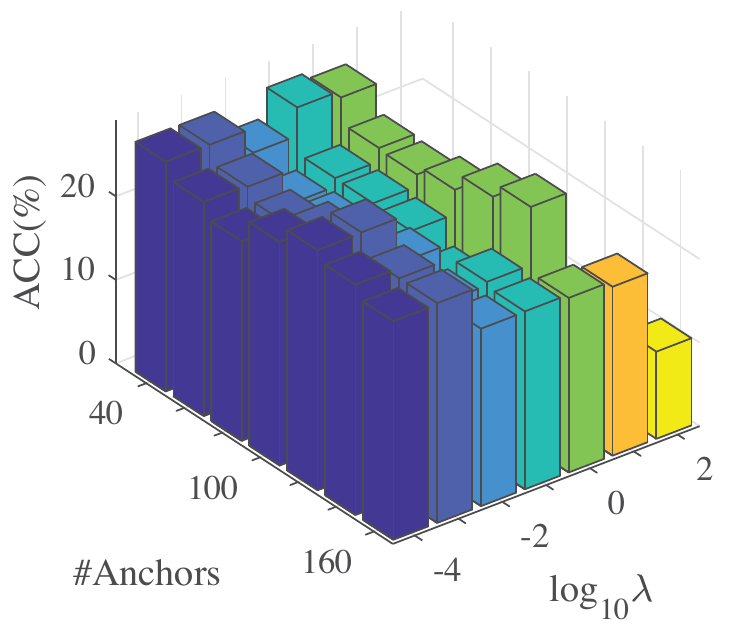}\\
			PUR(\%)
			&\includegraphics[width=1.8cm]{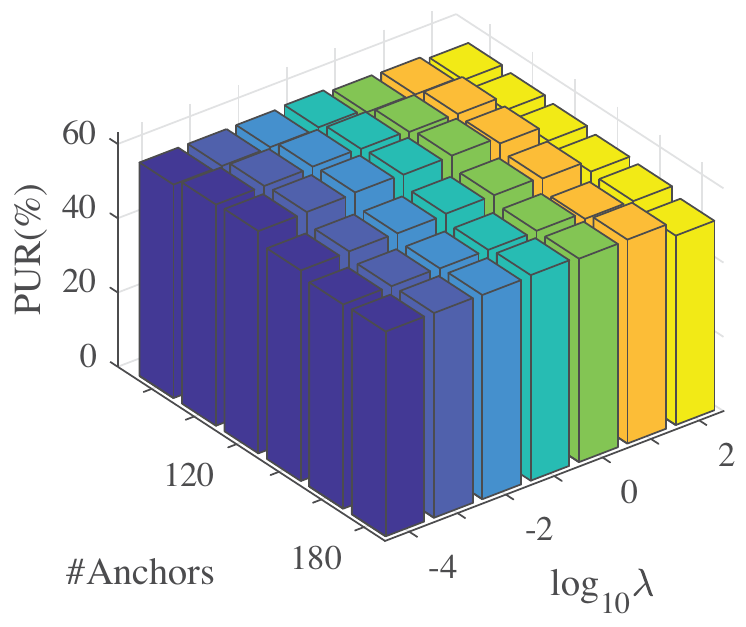}
			&\includegraphics[width=1.8cm]{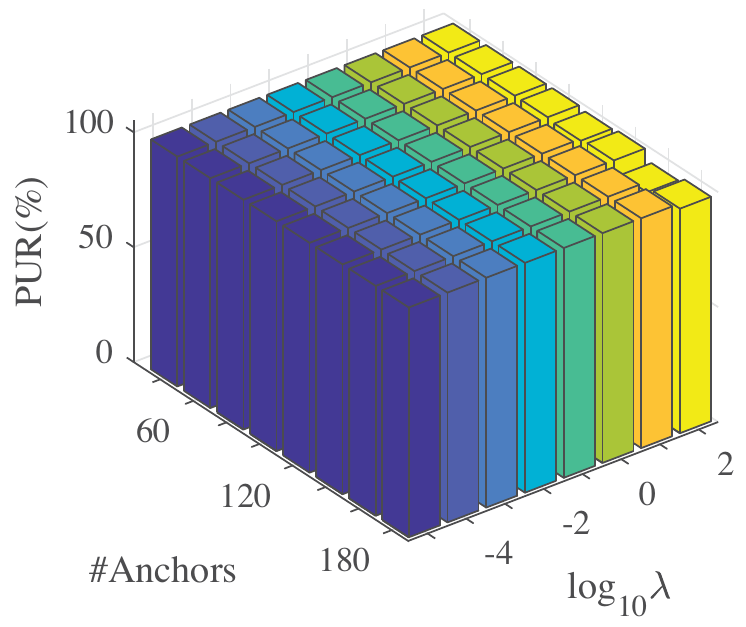}
			&\includegraphics[width=1.8cm]{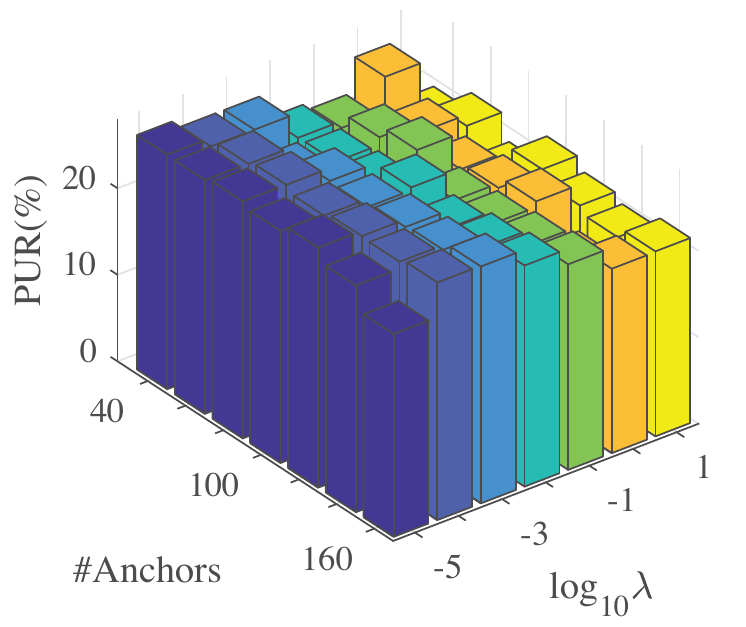}
			&\includegraphics[width=1.8cm]{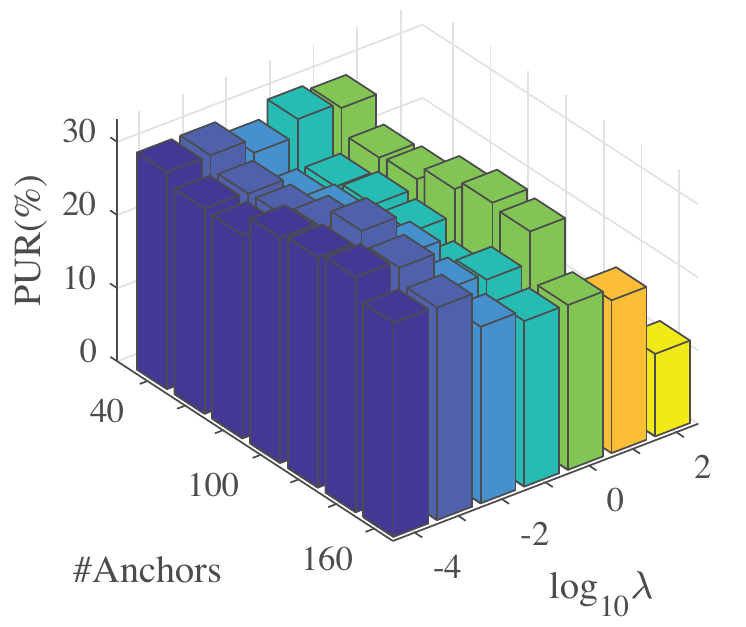}\\
			\bottomrule
		\end{tabular}
	\end{threeparttable}
\end{table}

Specifically, we illustrate the clustering performance of OBCut (w.r.t. NMI, ACC, and PUR) in Table~\ref{tab:sensitivity} with varying values of $\lambda$ and $M$.
In terms of the trade-off parameter $\lambda$, moderate or relatively small values of $\lambda$ are usually beneficial to the clustering performance, which suggests the balance between the adaptive bipartite graph learning term and the bipartite graph partitioning term in OBCut. In terms of the number of anchors $M$, our OBCut method yields quite consistent clustering performance with varying number of anchors. In practice, the tuning of the number of anchors is not a necessary issue. In this work, we use $M=100$ on all the benchmark datasets.

\subsection{Influence of Adaptive Learning of Anchors and Bipartite Graph}
\label{sec:ablation1}
In the proposed OBCut method, the adaptive anchor learning and the bipartite graph learning are simultaneously enforced. 
In this section, we test the influence of the adaptive learning of the anchors and the bipartite graph.

Specifically, three versions (or variants) of our OBCut method are compared. First, the proposed OBCut method with learnable anchors and learnable bipartite graph is denoted as OBCut(LA+LG). Second, the variant using fixed anchor set and learnable bipartite graph is denoted as OBCut(FA+LG). Third, the variant using fixed anchor set and fixed bipartite graph is denoted as OBCut(FA+FG). As shown in Table \ref{table:ablationM1}, when comparing the performances of OBCut(LA+LG) and OBCut(FA+LG), it can be observed that the incorporation of adaptive anchor learning generally leads to more robust clustering performance than using fixed anchors. When comparing the performances of OBCut(LA+LG) and OBCut(FA+FG), it can be observed that the joint learning of the anchor set and the bipartite graph can significantly benefit the clustering performance.

\begin{table}
	\centering \vskip 0.13 in
	\caption{The clustering performance of OBCut with or without the adaptive learning of the anchors and the bipartite graph (\textbf{FA}: Fixed Anchors; \textbf{FG}: Fixed Bipartite Graph; \textbf{LA}: Learnable Anchors; \textbf{LG}: Learnable Bipartite Graph).}\vskip -0.05 in
	\label{table:ablationM1}
	\begin{threeparttable}
		\begin{tabular}{m{0.88cm}<{\centering}|m{1.45cm}<{\centering}m{1.45cm}<{\centering}m{1.45cm}<{\centering}m{1.55cm}<{\centering}}
			\toprule
			Dataset   &MPEG-7  &Yale  &Abalone &LR \\
			\midrule
			\multirow{1}{*}{NMI(\%)}
			&\includegraphics[width=1.78cm]{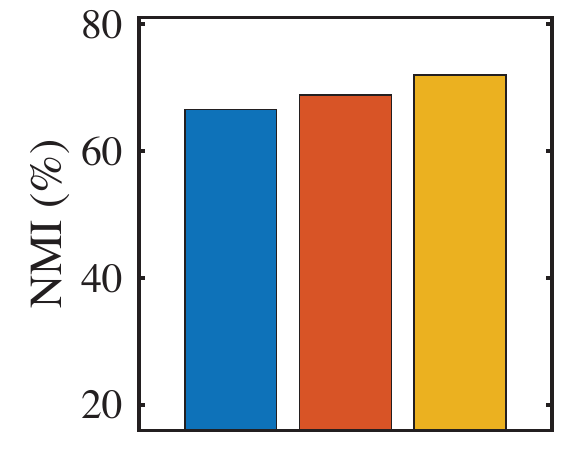}
			&\includegraphics[width=1.78cm]{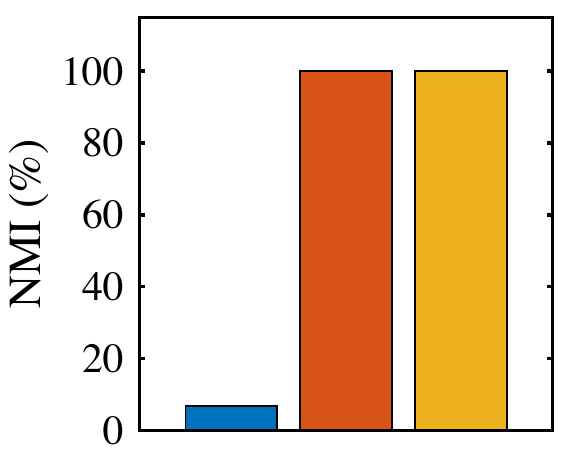}
			&\includegraphics[width=1.78cm]{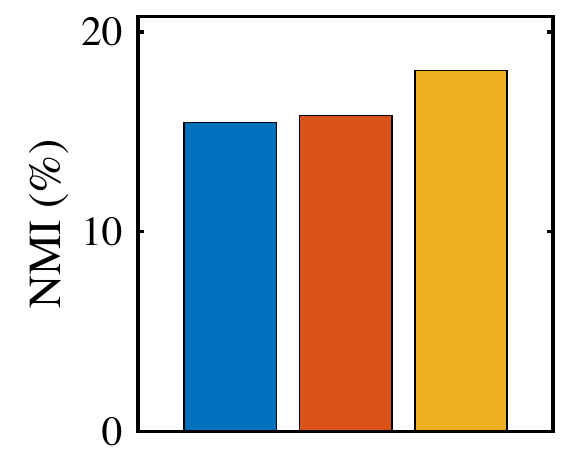}
			&\includegraphics[width=1.78cm]{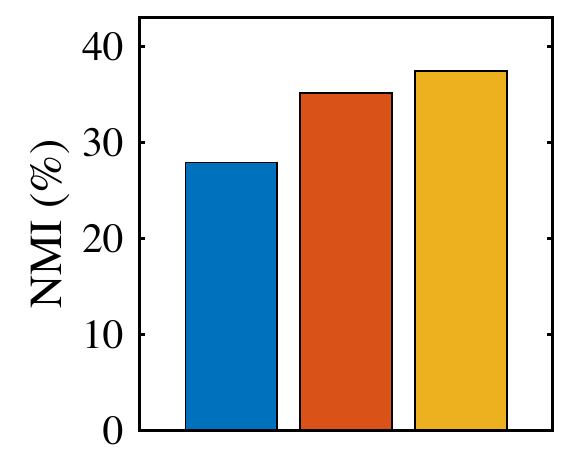}\\
			ACC(\%)
			&\includegraphics[width=1.78cm]{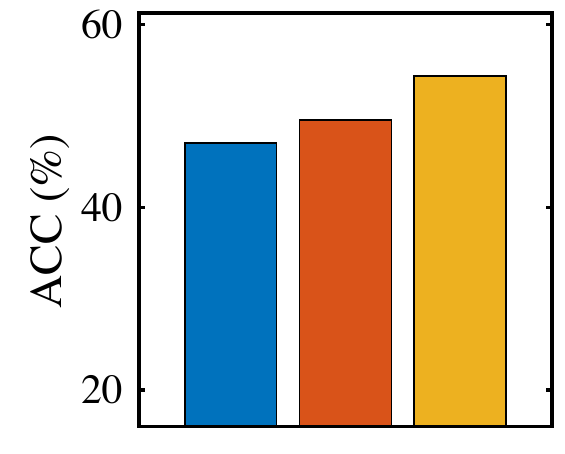}
			&\includegraphics[width=1.78cm]{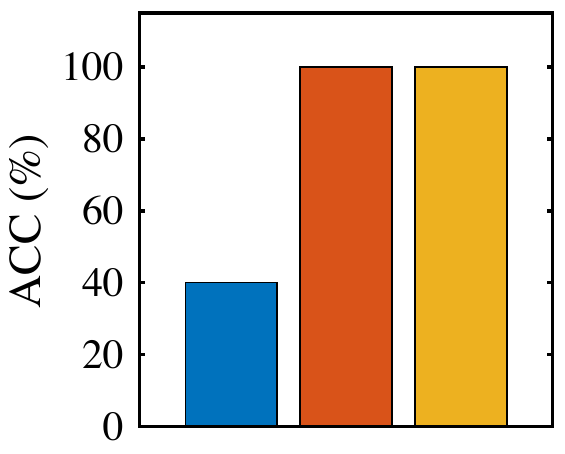}
			&\includegraphics[width=1.78cm]{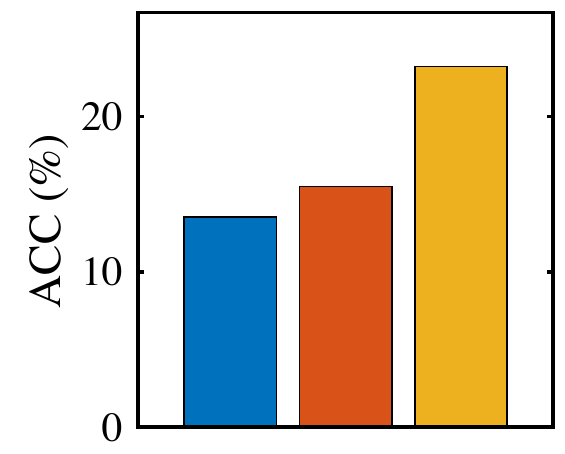}
			&\includegraphics[width=1.78cm]{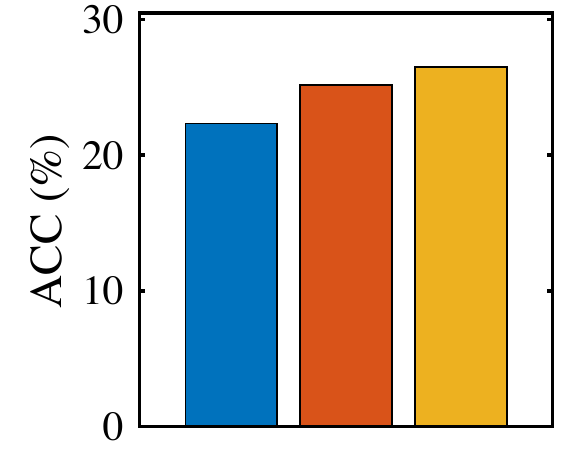}\\
			PUR(\%)
			&\includegraphics[width=1.78cm]{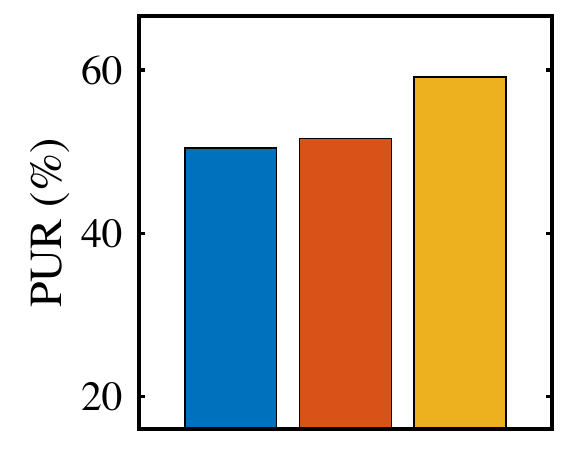}
			&\includegraphics[width=1.78cm]{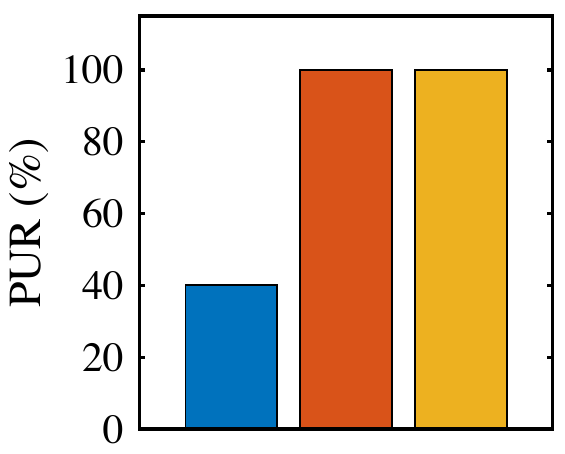}
			&\includegraphics[width=1.78cm]{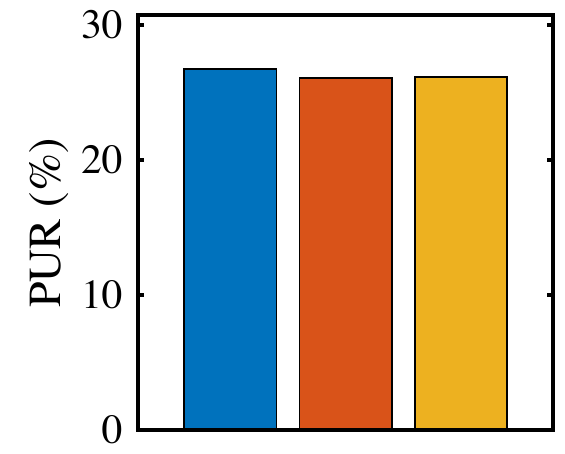}
			&\includegraphics[width=1.78cm]{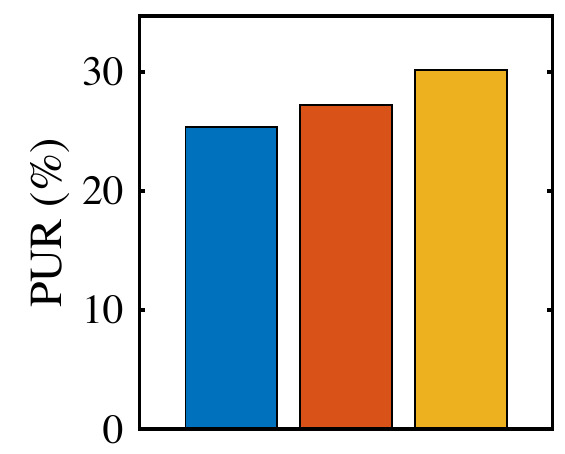}\\
			&\multicolumn{4}{c}{\includegraphics[width=6.3cm]{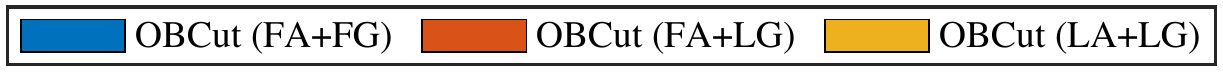}}\\
			\bottomrule
		\end{tabular}\vskip 0.07in
	\end{threeparttable}
\end{table}

\subsection{Influence of the Unified Formulation}
In the objective function \eqref{eq:obj-final} of OBCut, the adaptive anchor and bipartite graph learning is enforced via the first subspace learning term, while the normalized bipartite graph partitioning term is enabled via the second term. In this section, we test the influence of the joint modeling of bipartite graph learning and bipartite graph partitioning in OBCut. Specifically, by treating the bipartite graph learning and the bipartite graph partitioning as two separate steps, we can have the variant called OBCut(two-step formulation). As shown in Table~\ref{table:ablationM2}, especially on the Yale and Abalone datasets, the proposed method with the unified formulation outperforms the two-step variant w.r.t. NMI and ACC by a significant margin. From the experimental results on the four test datasets, it can be observed that the unified formulation of OBCut is able to yield overall more robust clustering performance than the two-step variant.

\begin{table}\vskip 0.19 in
	\centering 
	\caption{The clustering performance of OBCut with unified (one-step) formulation against two-step formulation.}\vskip -0.05 in
	\label{table:ablationM2}
	\begin{threeparttable}
		\begin{tabular}{m{0.88cm}<{\centering}|m{1.45cm}<{\centering}m{1.45cm}<{\centering}m{1.45cm}<{\centering}m{1.55cm}<{\centering}}
			\toprule
			Dataset   &MPEG-7  &Yale  &Abalone &LR \\
			\midrule
			\multirow{1}{*}{NMI(\%)}
			&\includegraphics[width=1.78cm]{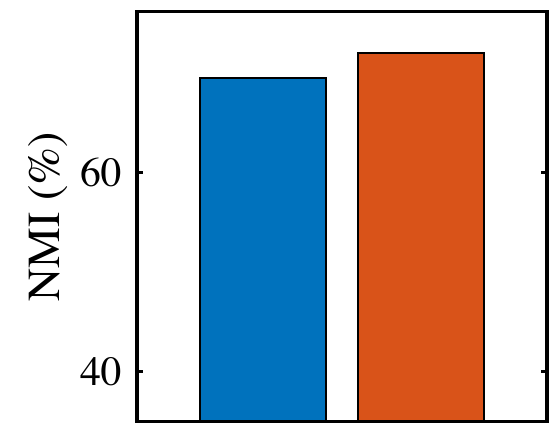}
			&\includegraphics[width=1.78cm]{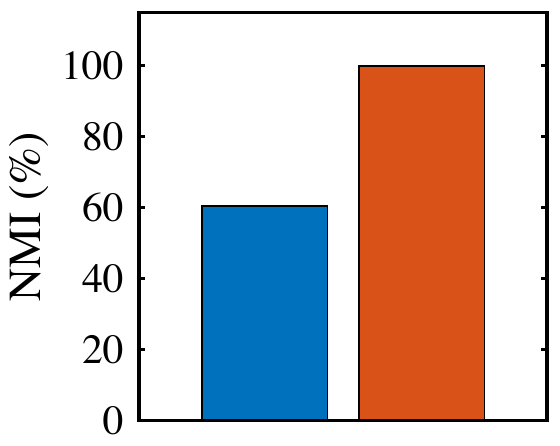}
			&\includegraphics[width=1.78cm]{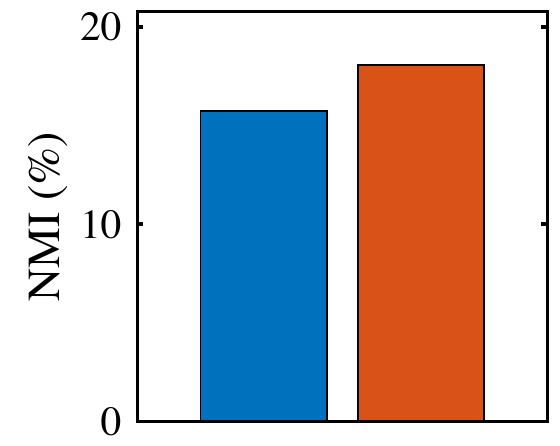}
			&\includegraphics[width=1.78cm]{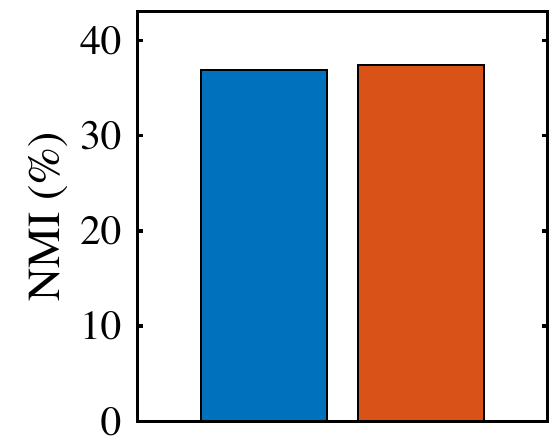}\\
			ACC(\%)
			&\includegraphics[width=1.78cm]{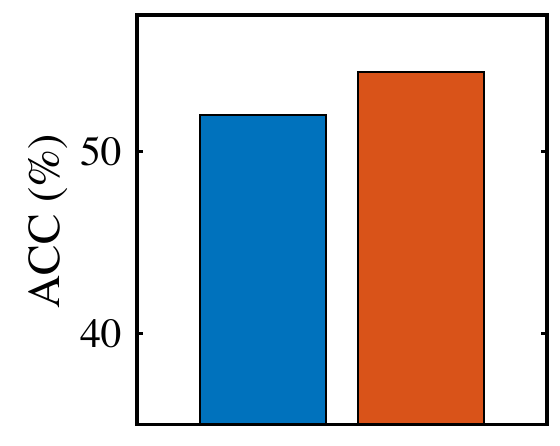}
			&\includegraphics[width=1.78cm]{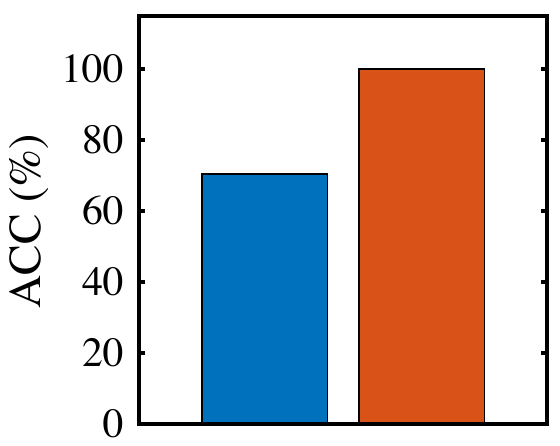}
			&\includegraphics[width=1.78cm]{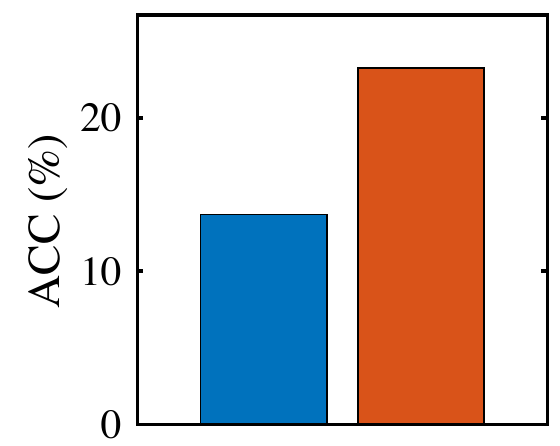}
			&\includegraphics[width=1.78cm]{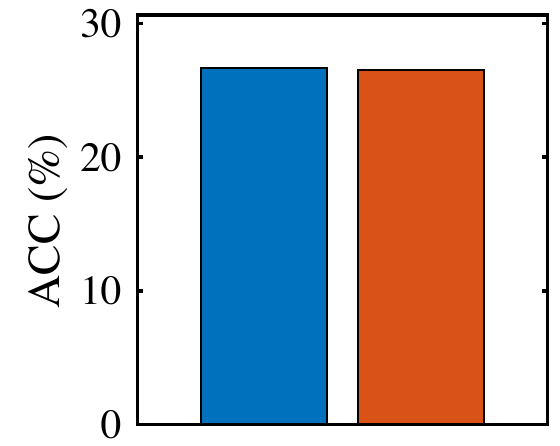}\\
			PUR(\%)
			&\includegraphics[width=1.78cm]{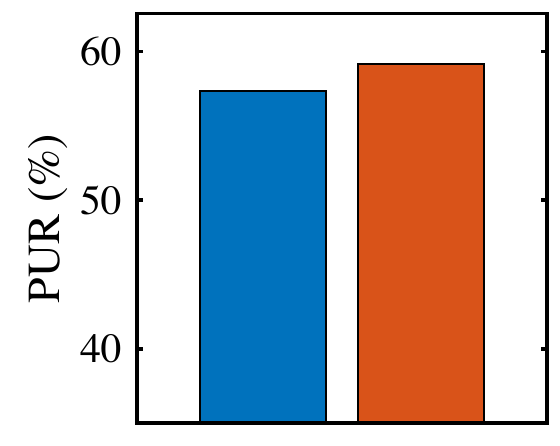}
			&\includegraphics[width=1.78cm]{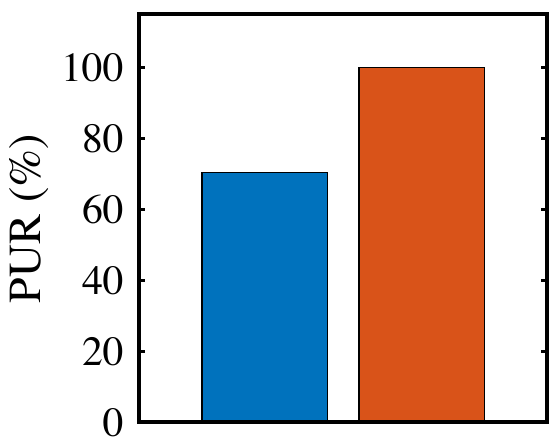}
			&\includegraphics[width=1.78cm]{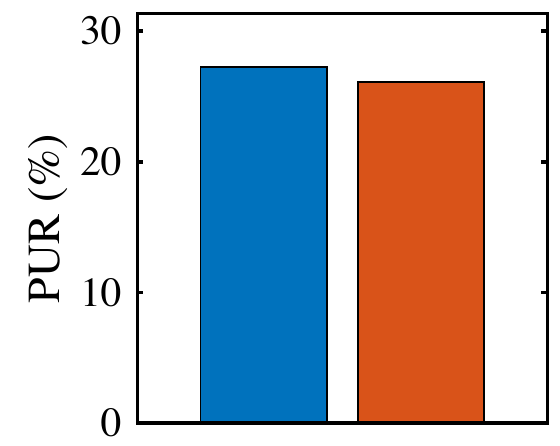}
			&\includegraphics[width=1.78cm]{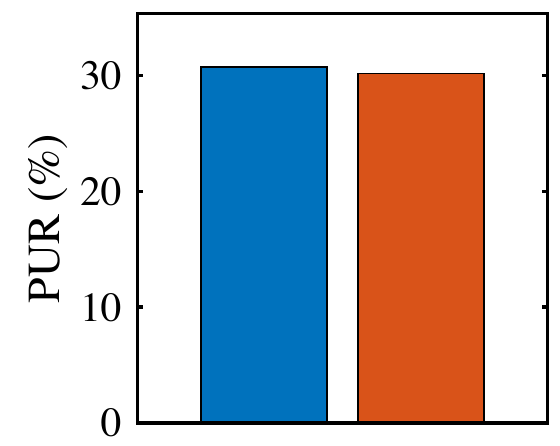}\\
			&\multicolumn{4}{c}{\includegraphics[width=6.4cm]{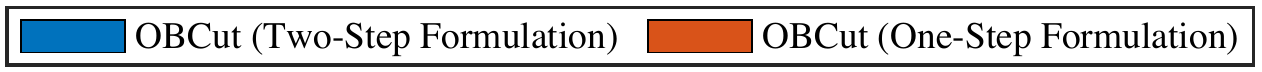}}\\
			\bottomrule
		\end{tabular}
	\end{threeparttable}
\end{table}

\begin{figure}[!t] 
	\centering
	\begin{center}
		{\subfigure[{\scriptsize MPEG-7}]
			{\includegraphics[width=0.442\columnwidth]{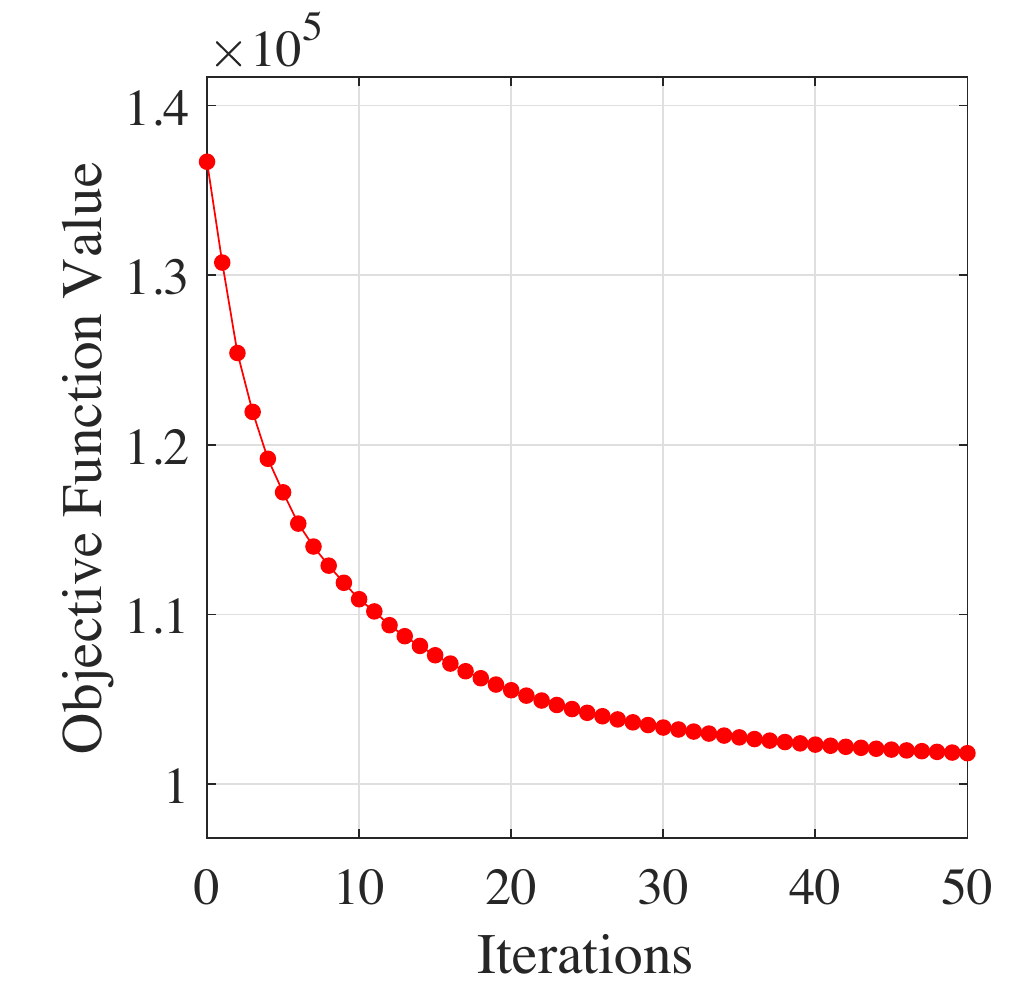}}}
		{\subfigure[{\scriptsize Yale}]
			{\includegraphics[width=0.442\columnwidth]{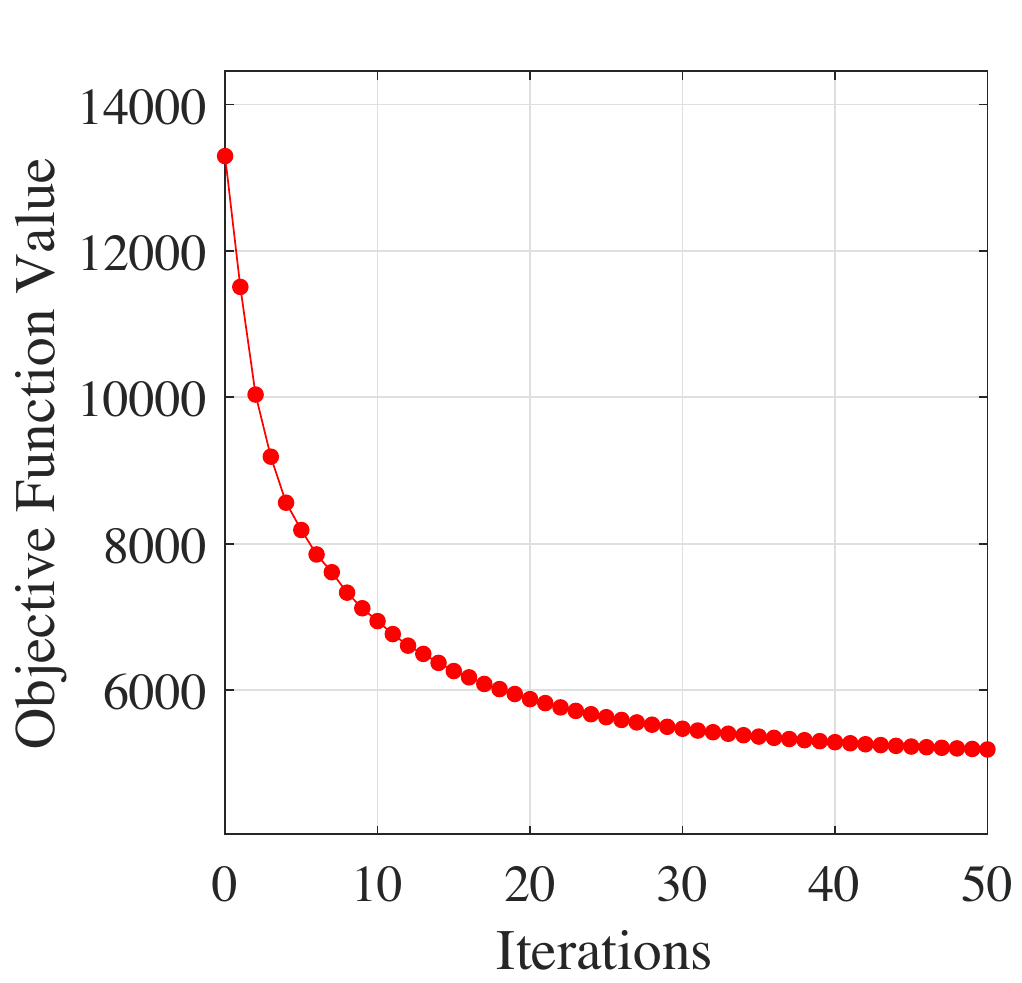}}}\vskip -0.12in
		{\subfigure[{\scriptsize Abalone}]
			{\includegraphics[width=0.442\columnwidth]{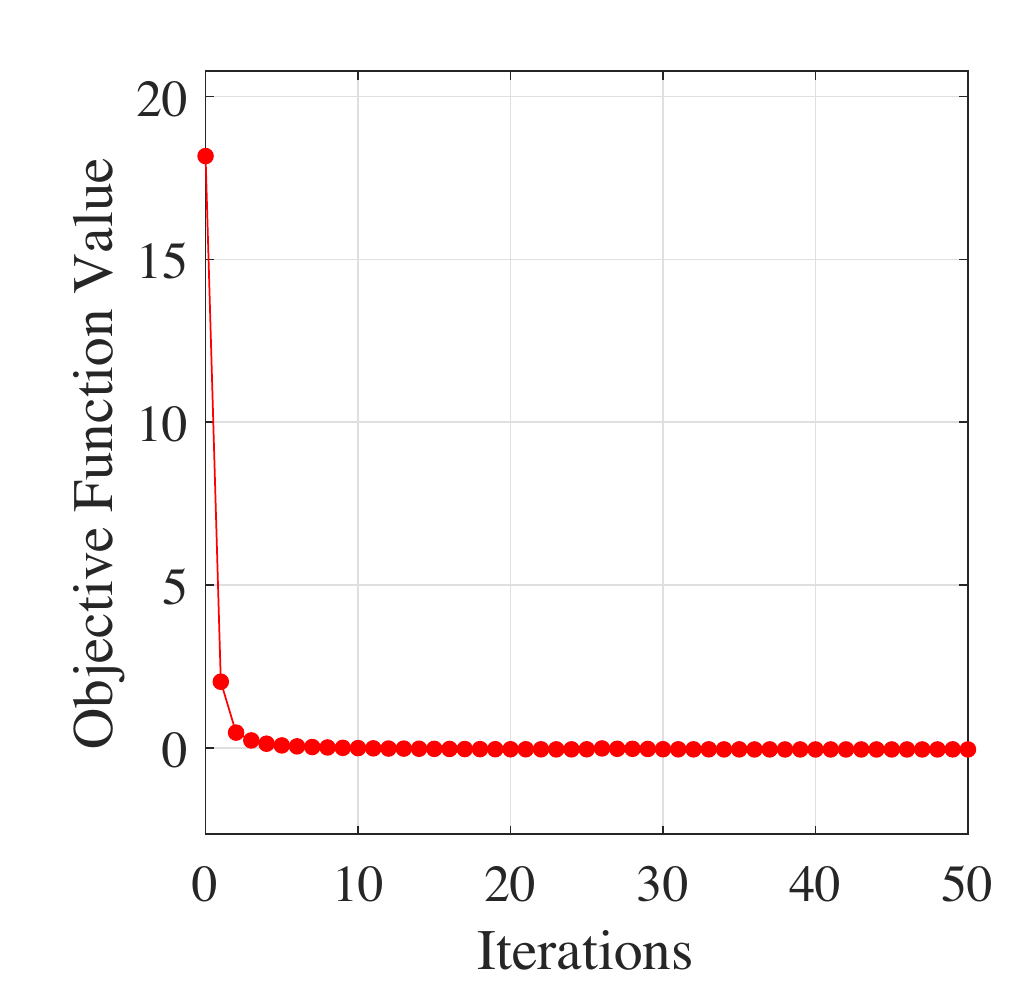}}}
		{\subfigure[{\scriptsize LR}]
			{\includegraphics[width=0.442\columnwidth]{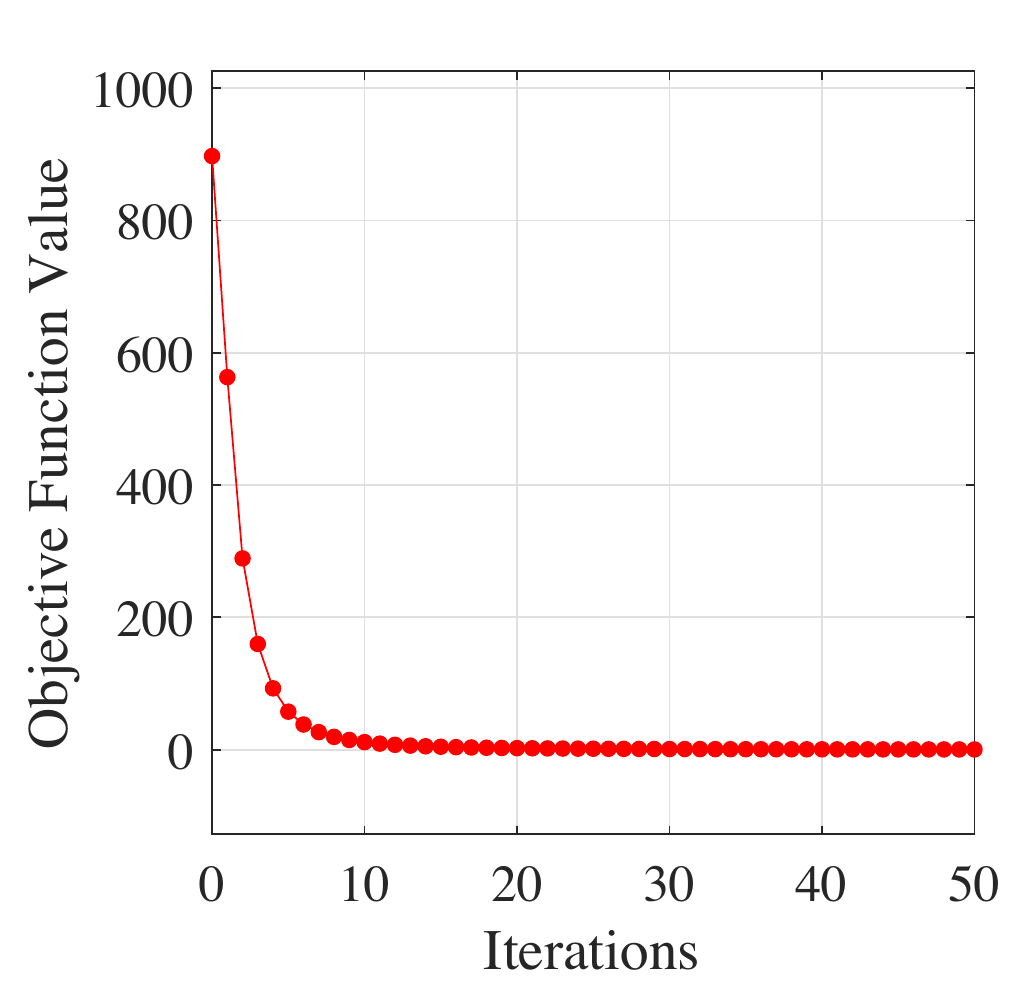}}}\vskip -0.12in
		\caption{Convergence of the objective function value of OBCut with increasing iterations.}
		\label{fig:convergence}
	\end{center}
\end{figure}

\subsection{Convergence Analysis}
\label{sec:empirical_convergence}
In this section, we conduct the convergence analysis on the four benchmark datasets. Figure \ref{fig:convergence} shows the objective function values of OBCut varying with different iterations. As can be seen in this figure, the objective function values monotonically decrease and rapidly converge as the number of iterations grows, which shows the good convergence property of our proposed OBCut method.

\subsection{Execution Time}
In this section, we evaluate the time efficiency of different clustering methods. As the YTF-100 dataset consists of 195,537 data samples, we test the execution times of different methods with different subsets of YTF-100, whose sizes go from 10,000 to the full size of 195,537. As shown in Fig.~\ref{fig:time}, SSC and ESCG are not computationally feasible for the full dataset of YTF-100 due to the out-of-memory error. For the other methods, OBCut is faster than DCDP-ASC, RKSC and EulerSC, and slower than U-SPEC, FastESC, and LSC, probably due to the fact that U-SPEC and LSC utilize the predefined bipartite graph but lack the bipartite graph learning process. 

To conclude the experimental analysis, the proposed OBCut method is able to achieve significantly better clustering performance than the baseline methods (as shown in Tables~\ref{tab:NMI}, \ref{tab:ACC}, and \ref{tab:PUR}) while maintaining competitive efficiency for very large-scale datasets (as shown in Fig.~\ref{fig:time}).

\begin{figure}[!t] 
	\centering
	\begin{center}
		{\subfigure[{\scriptsize Original Scale}]
			{\label{fig:time-a}\includegraphics[width=0.5128\columnwidth]{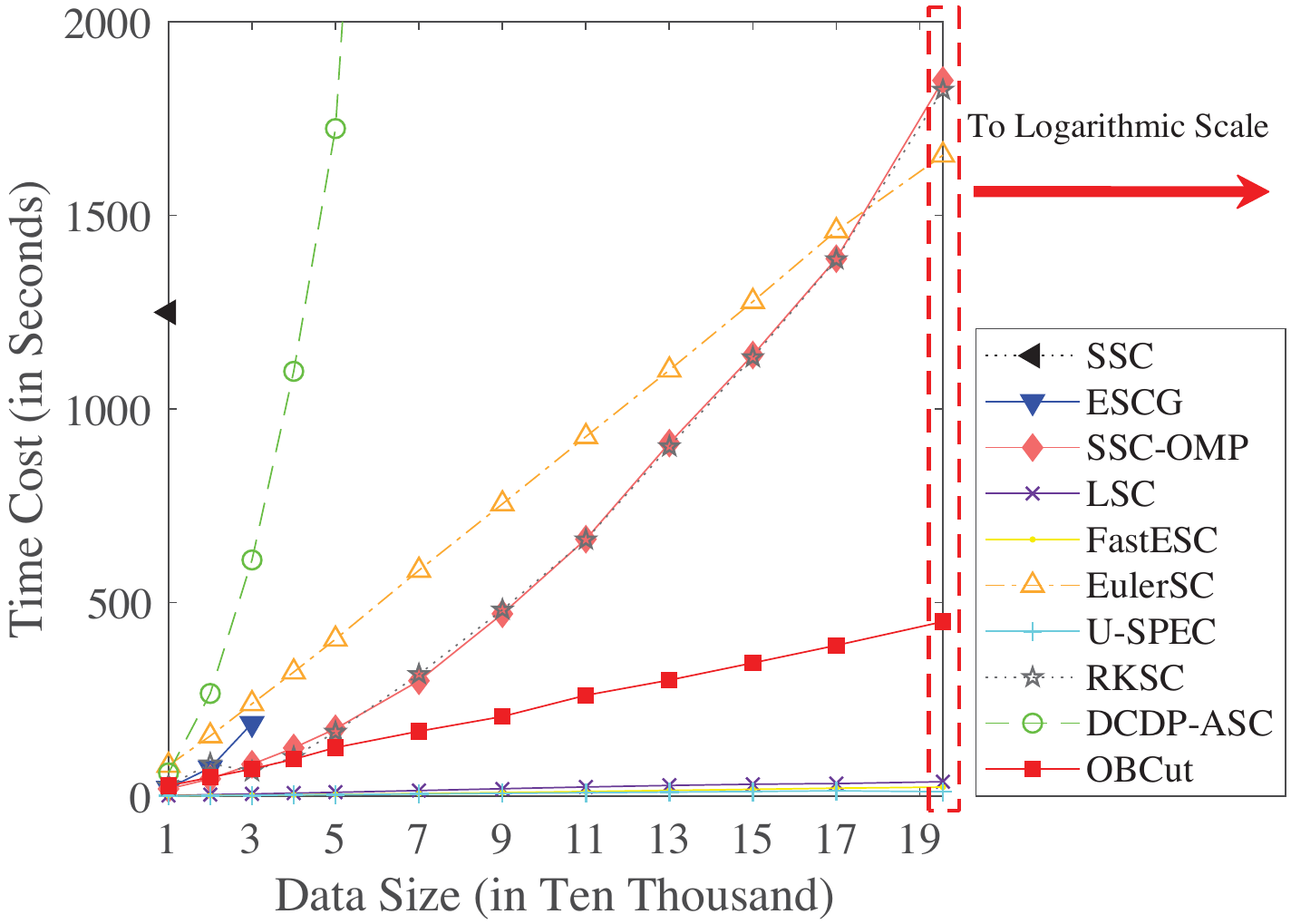}}}
		{\subfigure[{\scriptsize Logarithmic Scale}]
			{\label{fig:time-b}\includegraphics[width=0.4772\columnwidth]{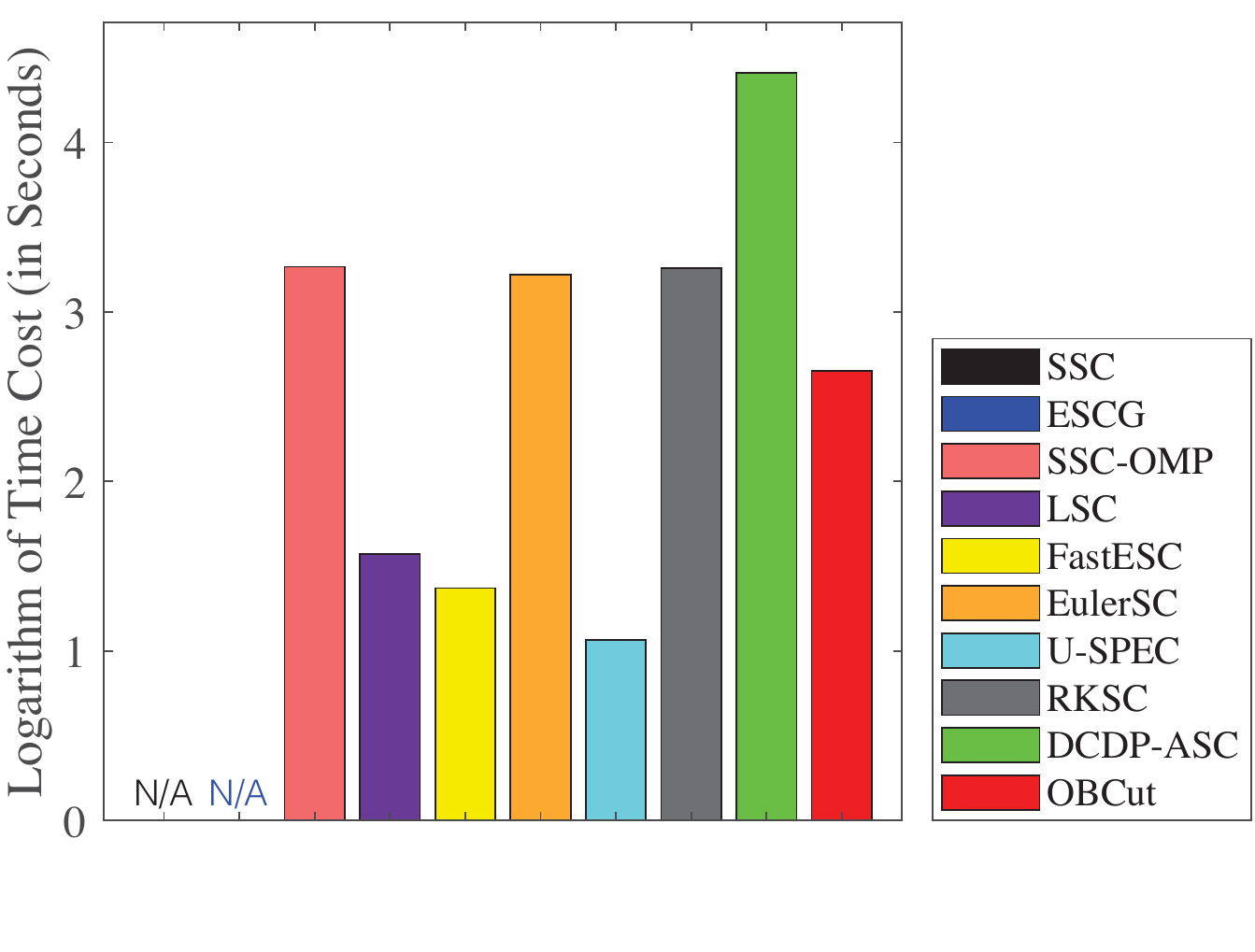}}}\vskip -0.09in
		\caption{Time costs of different clustering methods on the YTF-100 dataset with the data size varying from $10,000$ to $195,537$.}\vskip -0.09in
		\label{fig:time}
	\end{center}
\end{figure}

\section{Conclusion}\label{sec:conclusion}
In this paper, we propose a new scalable subspace clustering approach based on one-step bipartite graph cut (OBCut). In particular, we first characterize a one-step normalized bipartite graph cut criterion, and theoretically prove its equivalence to a trace maximization problem. Based on the new bipartite graph cut criterion, by simultaneously modeling adaptive anchor learning, bipartite graph learning (via subspace learning), and normalized bipartite graph partitioning in a joint learning framework, we can directly achieve a discrete clustering solution in a one-step formulation. An alternating optimization algorithm is designed to solve this joint learning problem, whose time complexity is linear to the sample size. Extensive experiments on eight real-world datasets have demonstrated the superiority of our OBCut approach over the state-of-the-art subspace/spectral clustering approaches.

\section*{Acknowledgments}

This project was supported by the NSFC (61976097, 62276277  \& U22A2095), and the Natural Science Foundation of Guangdong Province (2021A1515012203).

\bibliographystyle{IEEEtran}
\bibliography{BiCut-ref}

\begin{IEEEbiography}[{\includegraphics[width=1in,height=1.25in,clip,keepaspectratio]{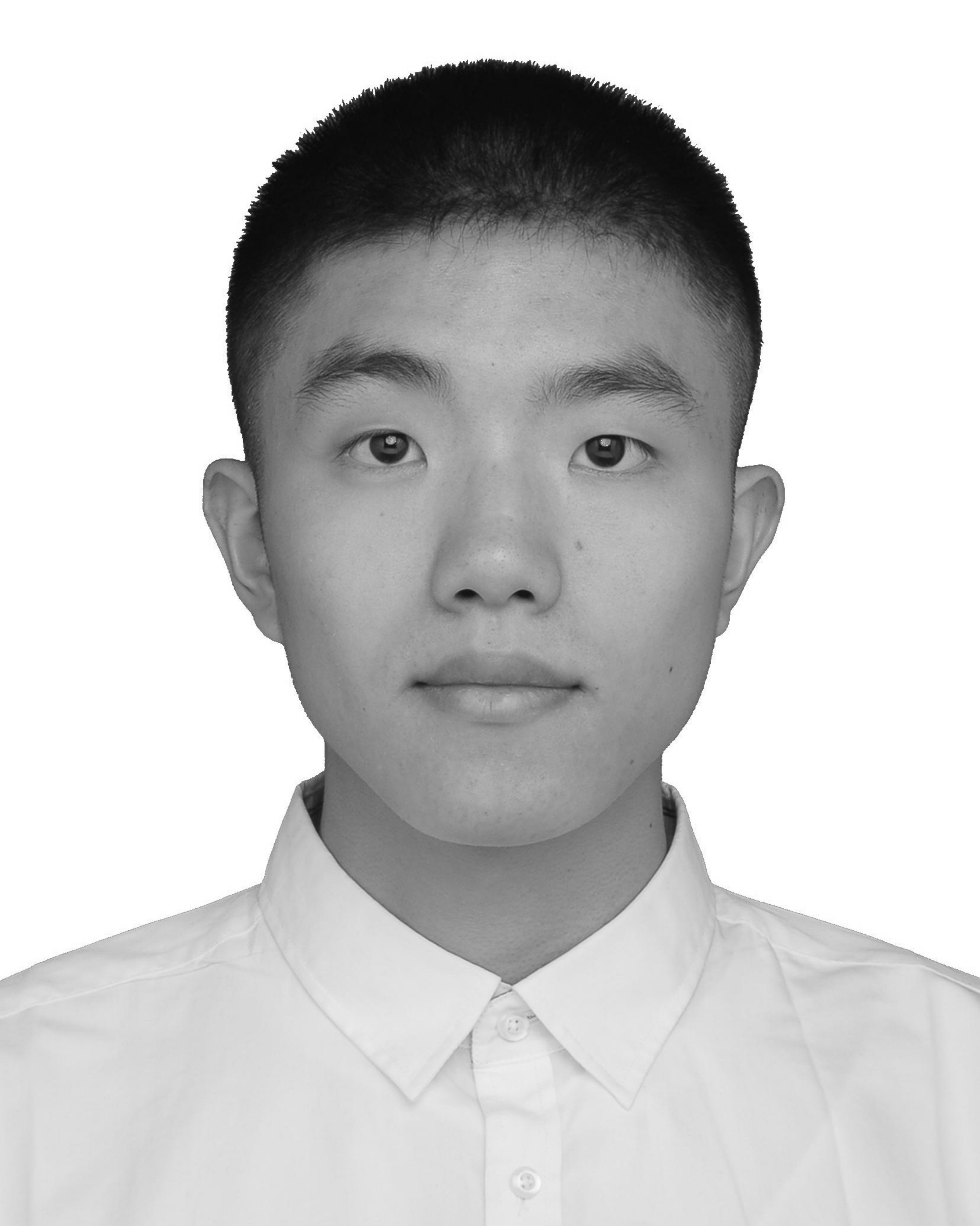}}]{Si-Guo Fang}
	received the B.S. degree in information and computing sciences from the China Jiliang University, Hangzhou, China, in 2021. He is currently pursuing the master degree in computer science with the College of Mathematics and Informatics, South China Agricultural University, Guangzhou, China. His research interests include data mining and machine learning.
\end{IEEEbiography}

\begin{IEEEbiography}[{\includegraphics[width=1in,height=1.25in,clip,keepaspectratio]{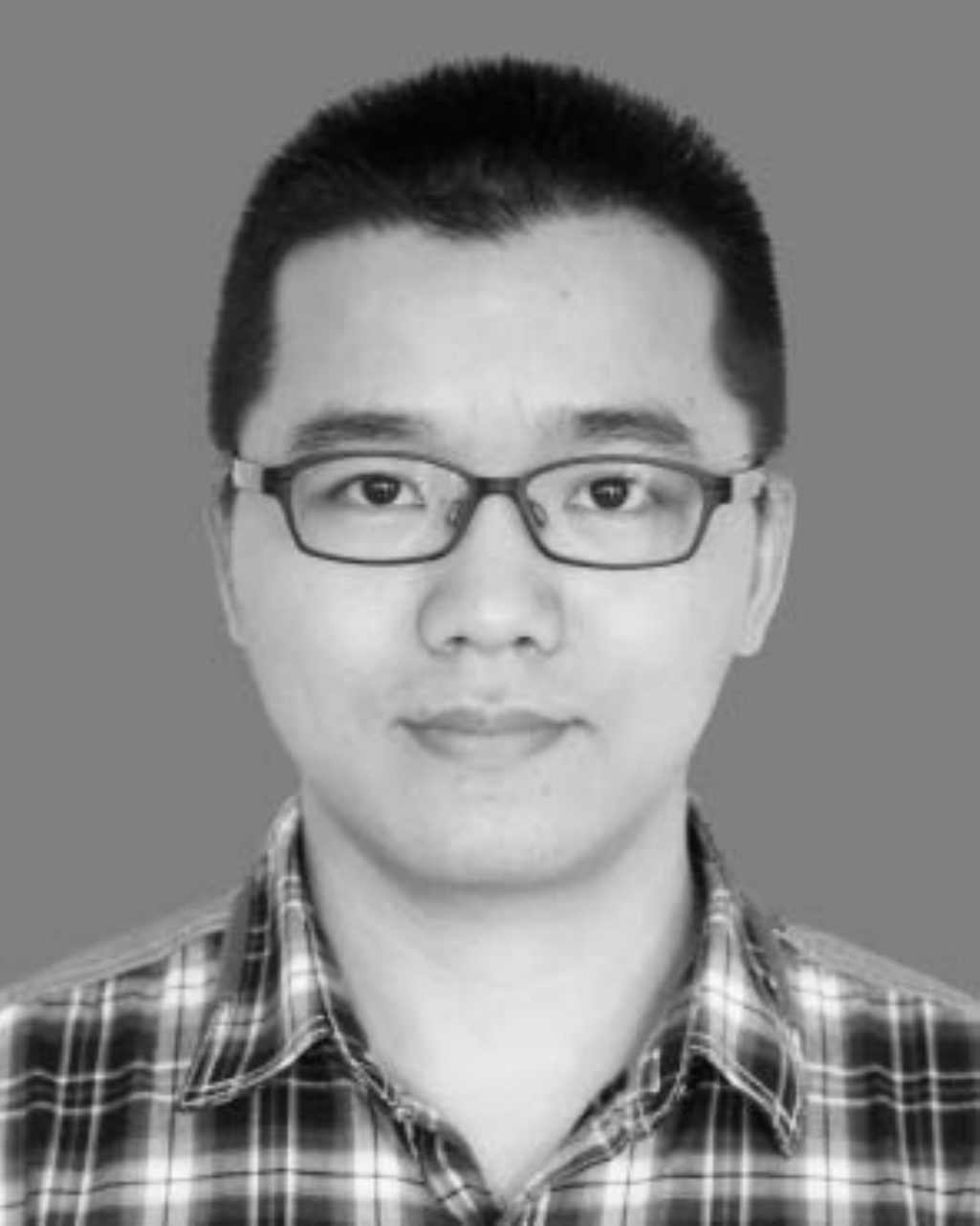}}]{Dong Huang}
	received the B.S. degree in computer science in 2009 from South China University of Technology, Guangzhou, China. He received the M.Sc. degree in computer science in 2011 and the Ph.D. degree in computer science in 2015, both from Sun Yat-sen University, Guangzhou, China. He joined South China Agricultural University in 2015, where he is currently an Associate Professor with the College of Mathematics and Informatics. From July 2017 to July 2018, he was a visiting fellow with the School of Computer Science and Engineering, Nanyang Technological University, Singapore. His research interests include data mining and machine learning. He has published more than 70 papers in international journals and conferences, such as IEEE TKDE, IEEE TNNLS, IEEE TCYB, IEEE TSMC-S, ACM TKDD, SIGKDD, AAAI, and ICDM. He was the recipient of the 2020 ACM Guangzhou Rising Star Award.
\end{IEEEbiography}

\begin{IEEEbiography}[{\includegraphics[width=1in,height=1.25in,clip,keepaspectratio]{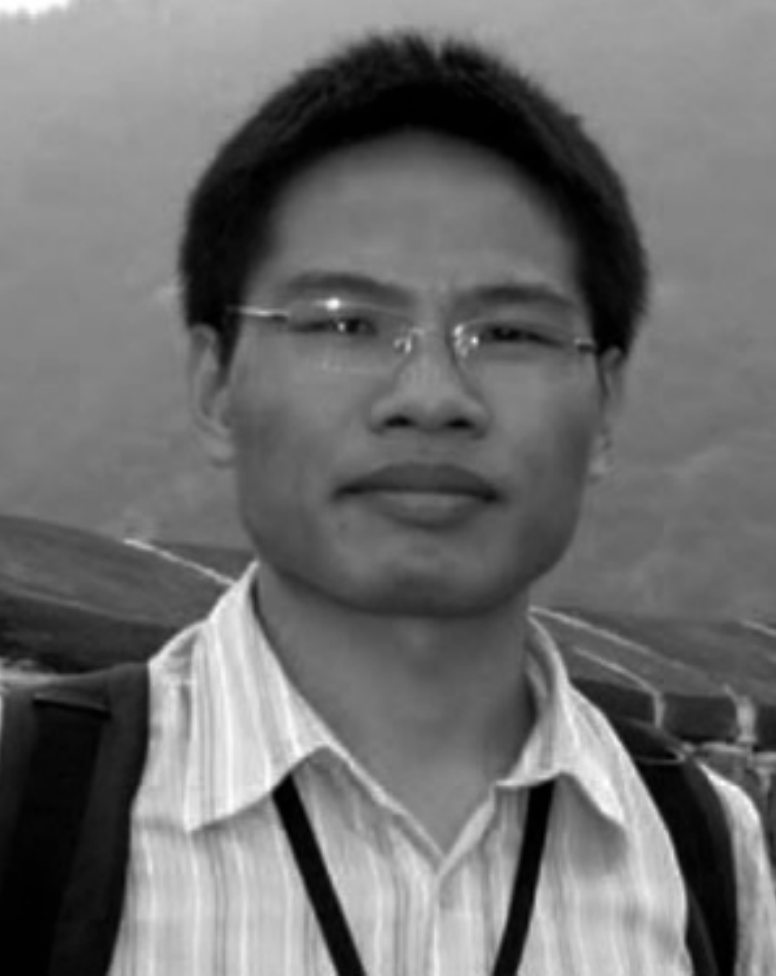}}]{Chang-Dong Wang}
	received the B.S. degree in applied mathematics in 2008, the M.Sc. degree in computer science in 2010, and the Ph.D. degree in computer science in 2013, all from Sun Yat-sen University, Guangzhou, China. He was a visiting student at the University of Illinois at Chicago from January 2012 to November 2012. He is currently an Associate Professor with the School of Data and Computer Science, Sun Yat-sen University, Guangzhou, China. His current research interests include machine learning and data mining. He has published more than 100 scientific papers in international journals and conferences such as IEEE TPAMI, IEEE TKDE, IEEE TNNLS, IEEE TSMC-C, ACM TKDD, Pattern Recognition, SIGKDD, AAAI, ICDM and SDM. His ICDM 2010 paper won the Honorable Mention for Best Research Paper Award. He was awarded 2015 Chinese Association for Artificial Intelligence (CAAI) Outstanding Dissertation.
\end{IEEEbiography}

\begin{IEEEbiography}[{\includegraphics[width=1in,height=1.25in,clip,keepaspectratio]{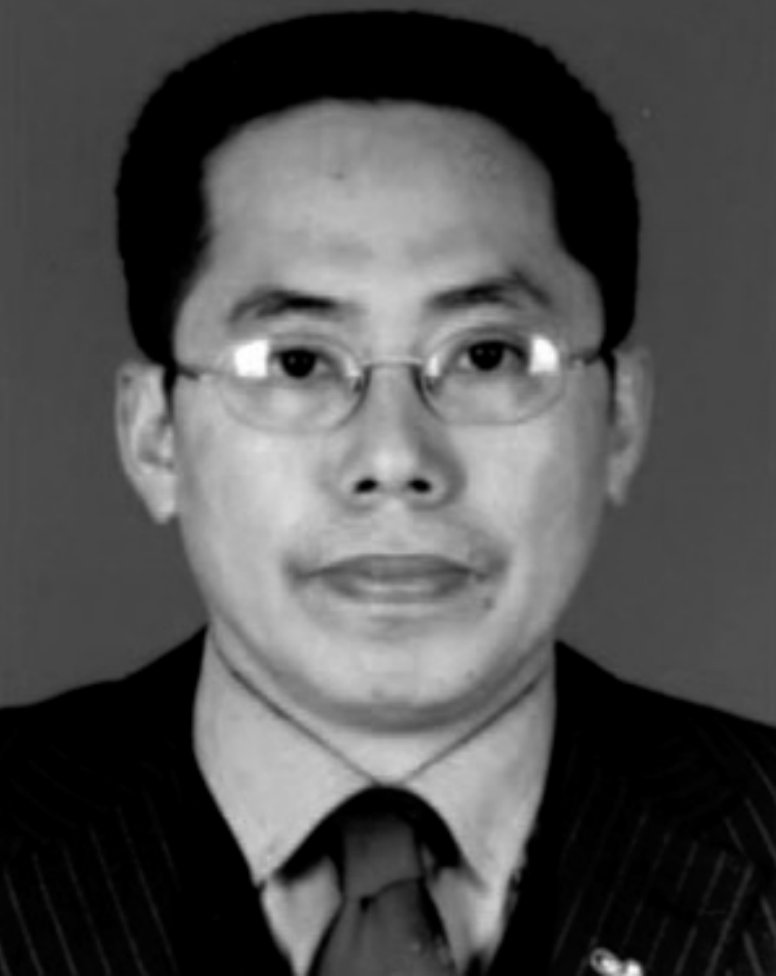}}]{Jian-Huang Lai}
received the M.Sc. degree in applied mathematics in 1989 and the Ph.D. degree in mathematics in 1999 from Sun Yat-sen University, China. He joined Sun Yat-sen University in 1989 as an Assistant Professor, where he is currently a Professor with the School of Data and Computer Science. His current research interests include the areas of digital image processing, pattern recognition, multimedia communication, wavelet and its applications. He has published more than 200 scientific papers in the international journals and conferences on image processing and pattern recognition, such as IEEE TPAMI, IEEE TKDE, IEEE TNN, IEEE TIP, IEEE TSMC-B, Pattern Recognition, ICCV, CVPR, IJCAI, ICDM and SDM. Prof. Lai serves as a Standing Member of the Image and Graphics Association of China, and also serves as a Standing Director of the Image and Graphics Association of Guangdong.
\end{IEEEbiography}

\end{document}